\documentclass[letterpaper,11pt]{article}
\usepackage[margin=1in]{geometry}  

\newcommand{\fsig}{f' \big( \raisebox{-1pt}{$\sqrt{\vphantom{1_1}\smash[b]{1 - \sigma_t^2}}$} \, \big)}
\newcommand{\sig}{\raisebox{0pt}{$\sqrt{\vphantom{1_1}\smash[b]{1 - \sigma_t^2}}$} \,}

\newcommand{\dd}{\mathrm{d}}
\newcommand{\Cn}{C_{\textsf{nm}}}
\newcommand{\Cs}{C_{\textsf{mt}}}
\newcommand{\Cse}{C_{\textsf{se}}}
\newcommand{\Cd}{c_{\textsf{dr}}}

\usepackage{comment}
\usepackage[makeroom]{cancel}
\usepackage{tikz}
\usetikzlibrary{decorations.pathreplacing,decorations.pathmorphing,arrows.meta}

\usepackage{amsmath,amsthm,amssymb,amsfonts,bm}

\def\1{\bm{1}}

\DeclareMathAlphabet{\mathsfit}{\encodingdefault}{\sfdefault}{m}{sl}
\SetMathAlphabet{\mathsfit}{bold}{\encodingdefault}{\sfdefault}{bx}{n}

\newcommand{\pth}[1]{\left( #1 \right)}
\newcommand{\qth}[1]{\left[ #1 \right]}
\newcommand{\sth}[1]{\left\{ #1 \right\}}
\newcommand{\bpth}[1]{\Big( #1 \Big)}
\newcommand{\bqth}[1]{\Big[ #1 \Big]}
\newcommand{\bsth}[1]{\Big\{ #1 \Big\}}

\newcommand{\rmd}{\mathrm{d}}

\newcommand{\cN}{\mathcal{N}}
\newcommand{\jiao}[1]{\langle{#1}\rangle}
\newcommand{\bS}{\mathbb{S}}
\newcommand{\bR}{\mathbb{R}}
\newcommand{\bE}{\mathbb{E}}
\newcommand{\bP}{\mathbb{P}}
\newcommand{\Unif}{\mathrm{Unif}}
\newcommand{\cA}{\mathcal{A}}
\newcommand{\var}{\mathrm{Var}}
\newcommand{\stepa}[1]{\overset{\rm (a)}{#1}}
\newcommand{\stepb}[1]{\overset{\rm (b)}{#1}}

\newcommand{\calE}{\mathcal{E}}
\newcommand{\calF}{\mathcal{F}}

\usepackage{algorithm}
\usepackage{color-edits}
\usepackage{enumitem}
\usepackage{bm,bbm}
\usepackage{subcaption}
\usepackage{cite}
\usepackage{aliascnt}
\usepackage{hyperref}
\usepackage{cleveref}

\newtheorem{theorem}{Theorem}[section]
\newaliascnt{lemma}{theorem}
\newtheorem{lemma}[lemma]{Lemma}
\aliascntresetthe{lemma}

\newaliascnt{proposition}{theorem}
\newtheorem{proposition}[proposition]{Proposition}
\aliascntresetthe{proposition}

\newaliascnt{corollary}{theorem}
\newtheorem{corollary}[corollary]{Corollary}
\aliascntresetthe{corollary}

\newaliascnt{definition}{theorem}
\newtheorem{definition}[definition]{Definition}
\aliascntresetthe{definition}

\newaliascnt{remark}{theorem}
\newtheorem{remark}[remark]{Remark}
\aliascntresetthe{remark}

\newaliascnt{assumption}{theorem}
\newtheorem{assumption}[assumption]{Assumption}
\aliascntresetthe{assumption}

\newaliascnt{example}{theorem}

\aliascntresetthe{example}

\crefname{theorem}{Theorem}{Theorems}
\Crefname{theorem}{Theorem}{Theorems}
\crefname{lemma}{Lemma}{Lemmas}
\Crefname{lemma}{Lemma}{Lemmas}
\crefname{proposition}{Proposition}{Propositions}
\Crefname{proposition}{Proposition}{Propositions}
\crefname{remark}{Remark}{Remarks}
\Crefname{remark}{Remark}{Remarks}
\crefname{corollary}{Corollary}{Corollaries}
\Crefname{corollary}{Corollary}{Corollaries}
\crefname{definition}{Definition}{Definitions}
\Crefname{definition}{Definition}{Definitions}
\crefname{assumption}{Assumption}{Assumptions}
\Crefname{assumption}{Assumption}{Assumptions}

\newcommand{\yh}[1]{{\color{red}[Yanjun: #1]}}
\newcommand{\nived}[1]{{\color{orange}[Nived: #1]}}

\begin{document}

\title{Interactive Learning of Single-Index Models \\ via Stochastic Gradient Descent}

\author{%
 Nived Rajaraman, Yanjun Han\thanks{Nived Rajaraman is with Microsoft Research NYC, email: \url{nrajaraman@microsoft.com}. Yanjun Han is with the Courant Institute of Mathematical Sciences and Center for Data Science, New York University, email: \url{yanjunhan@nyu.edu}.}
}

\maketitle

\maketitle

\begin{abstract}
Stochastic gradient descent (SGD) is a cornerstone algorithm for high-dimensional optimization, renowned for its empirical successes. Recent theoretical advances have provided a deep understanding of how SGD enables feature learning in high-dimensional nonlinear models, most notably the \emph{single-index model} with i.i.d. data. In this work, we study the sequential learning problem for single-index models, also known as generalized linear bandits or ridge bandits, where SGD is a simple and natural solution, yet its learning dynamics remain largely unexplored. We show that, similar to the optimal interactive learner, SGD undergoes a distinct ``burn-in'' phase before entering the ``learning'' phase in this setting. Moreover, with an appropriately chosen learning rate schedule, a single SGD procedure simultaneously achieves near-optimal (or best-known) sample complexity and regret guarantees across both phases, for a broad class of link functions. Our results demonstrate that SGD remains highly competitive for learning single-index models under adaptive data.
\end{abstract}

\section{Introduction}\label{sec:intro}
Stochastic gradient descent (SGD) and its many variants have achieved remarkable empirical success in solving high-dimensional optimization problems in machine learning. Recent theoretical advances have provided rigorous analyses of SGD in high-dimensional, non-convex settings for a range of statistical and machine learning tasks, such as tensor decomposition \cite{ge2015escaping}, PCA \cite{wang2017scaling}, phase retrieval \cite{chen2019gradient,tan2023online}, to name a few. A particularly intriguing setting is that of single-index models \cite{dudeja2018learning,arous2021online} (and generalizations to multi-index models \cite{abbe2022merged,abbe2023sgd,damian2022neural,arnaboldi2023high,bietti2023learning}) with Gaussian data. In this framework, each observation $(x_t, y_t)$ consists of a Gaussian feature $x_t\sim \cN(0,I_d)$ and a noisy outcome 
\begin{align*}
y_t = f(\jiao{\theta^\star, x_t}) + \varepsilon_t,
\end{align*}
where $f: \bR \to \bR$ is a known link function, $\theta^\star\in \bS^{d-1}$ is an unknown parameter vector on the unit sphere in $\bR^d$, and $\varepsilon_t$ denotes the unobserved noise. With a learning rate $\eta_t > 0$ and a random initialization $\theta_1\sim \Unif(\bS^{d-1})$, the SGD update for learning single-index models is given by
\begin{align}\label{eq:SGD_learning}
\begin{split}
\theta_{t+1/2} = \theta_t - \eta_t (f(\jiao{\theta_t, x_t}) - y_t)f'(\jiao{\theta_t, x_t}) \cdot (I-\theta_t\theta_t^\top) x_t, \quad \theta_{t+1} = \frac{\theta_{t+1/2}}{\|\theta_{t+1/2}\|}. 
\end{split}
\end{align}
Here, the first update is a descent step of the population loss $\theta \mapsto \frac{1}{2}\bE\pth{ f(\jiao{\theta, x}) - y}^2$ at $\theta=\theta_t$, whose spherical gradient\footnote{Recall that the spherical gradient of a function $f: \bS^{d-1}\to \bR$ is defined as $\nabla f = Df - \frac{\partial f}{\partial r}\frac{\partial}{\partial r}$, where $Df$ is the Euclidean gradient, and $\frac{\partial}{\partial r}$ is the derivative in the radial direction.} is estimated based on the current sample $(x_t,y_t)$. It is well known (cf. e.g. \cite{arous2021online}) that the evolution of SGD in this context exhibits two distinct phases: an initial ``search'' phase, during which the \emph{correlation} $\jiao{\theta_t, \theta^\star}$ gradually improves from $O(d^{-1/2})$ to $\Omega(1)$, followed by a ``descent'' phase in which the iterates $\theta_t$ converge rapidly to the global optimum $\theta^\star$, driving $\jiao{\theta_t, \theta^\star}$ arbitrarily close to $1$. 

Beyond statistical learning, single-index models have found applications in interactive decision-making problems, including bandits and reinforcement learning, where the reward is a nonlinear function of the action. An example is manipulation with object interaction, which represents one of the largest open problems in robotics \cite{billard2019trends} and requires designing good sequential decision rules that can deal with sparse and non-linear reward functions and continuous action spaces \cite{zhu2019dexterous}. This setting is known as the \emph{generalized linear bandit} or \emph{ridge bandit} in the bandit literature, where the mean reward satisfies $\bE[r_t| a_t] = f(\jiao{\theta^\star, a_t})$ with a known link function $f$. Classical results \cite{filippi2010parametric,russo2014learning} show that when $0<c_1\le f'(x)\le c_2$ everywhere, both the optimal regret and the optimal learner are essentially the same as in the linear bandit case (where $f(x)=x$). Recent studies \cite{lattimore2021bandit,huang2021optimal,rajaraman2024statistical} have considered challenging settings where $f'(x)$ could be small around $x=0$. This line of work yields two main insights: 
\begin{enumerate}
    \item While the final ``learning'' phase has the same regret as linear bandits, there could be a long ``burn-in'' period until the learner can identify some action $a_t$ with $\jiao{\theta^\star, a_t} = \Omega(1)$; 
    \item New exploration algorithms are necessary during this burn-in period, as classical methods such as UCB are provably suboptimal for minimizing the initial exploration cost.
 \end{enumerate}
In response to the second point, this line of research has proposed various exploration strategies for the burn-in phase that are often tailored to the specific link function $f$ and rely on noisy gradient estimates via zeroth-order optimization. In contrast, SGD offers a natural and straightforward alternative, as its intrinsic ``search'' and ``descent'' phases align well with the ``burn-in'' and ``learning'' phases encountered in interactive decision-making.

This paper is devoted to a systematic study of SGD for learning single-index models, including the aforementioned challenging setting where $f'(x)$ could be small around $x\approx 0$, within interactive decision-making settings. In these scenarios, the actions $a_t$ are no longer Gaussian, prompting us to adopt the following exploration strategy:
 \begin{align}\label{eq:exploration}
 a_t = \sqrt{1-\sigma_t^2}\theta_t + \sigma_t Z_t, \qquad Z_t\sim \Unif\pth{\sth{a\in \bS^{d-1}: \jiao{a, \theta_t} = 0 }},
 \end{align}
 where an additional hyperparameter $\sigma_t\in [0,1]$ governs the exploration-exploitation tradeoff. After playing the action $a_t$ and observing the reward $r_t$, we update the parameter $\theta_t$ via the same SGD as \eqref{eq:SGD_learning}: 
 \begin{align}\label{eq:SGD-decision-making}
\begin{split}
\theta_{t+1/2} = \theta_t - \eta_t (f(\jiao{\theta_t, a_t}) - r_t)f'(\jiao{\theta_t, a_t})\cdot (I-\theta_t\theta_t^\top) a_t, \quad \theta_{t+1} = \frac{\theta_{t+1/2}}{\|\theta_{t+1/2}\|}. 
\end{split}
\end{align}
By simple algebra, the stochastic gradient in \eqref{eq:SGD-decision-making} is also an unbiased estimator of the population (spherical) gradient of $\theta\mapsto \frac{1}{2}\bE\pth{f(\jiao{\theta,a})-r}^2$ at $\theta=\theta_t$, with the distribution of $a$ given by \eqref{eq:exploration} and the reward $r= f(\jiao{\theta^\star, a})+\varepsilon$. Our main result will establish that, for a broad class of link functions, this SGD procedure, with appropriately chosen hyperparameters $(\eta_t,\sigma_t)$, achieves near-optimal performance in both the burn-in and learning phases. 

\paragraph{Notation.} For $x\in \bR^d$, let $\|x\|$ be its $\ell_2$ norm. For $x,y\in \bR^d$, let $\jiao{x,y}$ be their inner product. Let $\bS^{d-1}$ be the unit sphere in $\bR^d$. Throughout this paper we will use $\theta^\star\in \bS^{d-1}$ to denote the true parameter, $\theta_t$ to denote the current estimate, and $m_t = \jiao{\theta^\star,\theta_t}\in [-1,1]$ to denote the \emph{correlation}. The standard asymptotic notations $o, O, \Omega$, etc. are used throughout the paper, and we also use $\widetilde{O}, \widetilde{\Omega}$, etc. to denote the respective meanings with hidden poly-logarithmic factors. 

\subsection{Main results}
First we give a formal formulation of the single-index model in the interactive setting. Let $\theta^\star\in \bS^{d-1}$ be an unknown parameter vector, and $\cA = \bS^{d-1}$ be the action space. Upon choosing an action $a_t\in \cA$, the learner receives a reward $r_t = f(\jiao{\theta^\star, a_t}) + \varepsilon_t$ for a known link function $f: [-1,1]\to \bR$ and an unobserved noise $\varepsilon_t$ which is assumed to be zero-mean and 1-subGaussian. 

\begin{remark}
The scaling considered here differs crucially from the prior study on learning single-index models under non-interactive environments (such as \cite{dudeja2018learning,arous2021online} with Gaussian or i.i.d. features). In the non-interactive setting, it is usually assumed that $x_t\sim \cN (0,I_d)$, so that $\| x_t \| \asymp \sqrt{d}$. In the interactive setting, we stick to the convention that actions belong to the unit $\ell_2$ ball, in line with settings considered in the bandit literature \cite{filippi2010parametric,russo2014learning,rajaraman2024statistical}. As a consequence, sample complexity comparisons between the interactive and non-interactive settings must be made with care. We discuss this in more detail in \Cref{sec:discussion} and compare with results established for online SGD with Gaussian features \cite{arous2021online} after normalizing for the difference in scaling.
\end{remark}

Throughout the paper we make the following mild assumptions on the link function $f$. 

\begin{assumption}\label{assump:f}
The following conditions hold for the link function $f$:
\begin{enumerate}
    \item \emph{(monotonicity)} $f: [-1,1]\to [-1,1]$ is non-decreasing, with $\|f\|_\infty \le 1$; 
    \item \emph{(locally linear near $x=1$)} $0<\gamma_1\le f'(x)\le \gamma_2 $ for all $x\in [1-\gamma_0, 1]$, with absolute constants $\gamma_0, \gamma_1, \gamma_2>0$. Without loss of generality we assume that $\gamma_0 \le 0.1$. 
\end{enumerate}
\end{assumption}

In \Cref{assump:f}, the monotonicity condition is taken from \cite{rajaraman2024statistical} to ensure that reward maximization is aligned with parameter estimation, where improving the alignment $\jiao{\theta^\star, a_t}$ directly increases the learner’s reward. In addition, when it comes to SGD, we will show in \Cref{sec:discussion} that the population loss associated with the SGD dynamics in \eqref{eq:SGD-decision-making} is decreasing in the correlation $m_t = \jiao{\theta^\star, \theta_t}$ only if $f$ is increasing. Without monotonicity, there also exists a counterexample where the SGD can never make meaningful progress (cf. \Cref{prop:counterexample}). Similar to \cite{rajaraman2024statistical}, this condition can be generalized to $f$ being even and non-decreasing on $[0,1]$, which covers, for example, $f(x) = |x|^p$ for all $p> 0$. The second condition in \cref{assump:f} is very mild, satisfied by many natural functions, and ensures that the problem locally resembles a linear bandit near the global optimum $a_t \approx \theta^\star$. Finally, we emphasize that this local linearity does not exclude the nontrivial scenario where $f'(x)$ is very small when $x \approx 0$.

Our first result is the SGD dynamics in the learning phase, under \Cref{assump:f}. 
\begin{theorem}[Learning Phase]\label{thm:learning}
Let $\varepsilon,\delta>0$. Under \cref{assump:f}, let $(a_t, \theta_t)_{t\ge 1}$ be given by the SGD evolution in \eqref{eq:exploration} and \eqref{eq:SGD-decision-making}, with an initialization $\theta_1$ such that $\jiao{\theta_1, \theta^\star} \ge 1-\gamma_0/4$.
\begin{enumerate}
    \item (Pure exploration) By choosing $\eta_t = \widetilde{\Theta}(\frac{d}{t}\wedge \frac{1}{d})$ and $\sigma_t^2 = \Theta(1)$, it holds that $m_T \ge 1-\varepsilon$ with probability at least $1-\delta T$, with $T = \widetilde{O}(\frac{d^2}{\varepsilon})$. 
    \item (Regret minimization) By choosing $\eta_t = \widetilde{\Theta}(\frac{1}{\sqrt{t}}\wedge \frac{1}{d})$ and $\sigma_t^2 = \widetilde{\Theta}(\frac{d}{\sqrt{t}}\wedge 1)$, with probability at least $1-\delta T$ it holds that $\sum_{t=1}^T (f(1)-f(m_t)) = \widetilde{O}(d\sqrt{T})$.
\end{enumerate}
\end{theorem}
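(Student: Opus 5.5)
The plan is to track the scalar $u_t = 1 - m_t$ and show that it satisfies a noisy contracting recursion. Together with a high-probability barrier argument, this recursion keeps $m_t \ge 1-\gamma_0/2$ at all times, so that $f$ behaves linearly in the relevant region.

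\textbf{Step 1: one-step expansion of $m_{t+1}$.} Write $g_t = (f(\jiao{\theta_t,a_t}) - r_t)\, f'(\jiao{\theta_t,a_t})\,(I-\theta_t\theta_t^\top)a_t$. Since $(I-\theta_t\theta_t^\top)a_t = \sigma_t Z_t$ and $\jiao{\theta_t,a_t} = \sqrt{1-\sigma_t^2}$, we have $g_t = \sigma_t\, \fsig\, \big(f(\sig) - f(\jiao{\theta^\star,a_t}) - \varepsilon_t\big) Z_t$, with $\|\theta_{t+1/2}\|^2 = 1+\eta_t^2\|g_t\|^2$. Hence
\begin{align*}
m_{t+1} = \frac{m_t - \eta_t \jiao{g_t,\theta^\star}}{\sqrt{1+\eta_t^2\|g_t\|^2}} \ge m_t - \eta_t\jiao{g_t,\theta^\star} - \tfrac12 \eta_t^2\|g_t\|^2 .
\end{align*}

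\textbf{Step 2: the drift term.} Decompose $\theta^\star = m_t\theta_t + \sqrt{1-m_t^2}\, v_t$ with $v_t \perp \theta_t$, and set $\xi_t = \jiao{Z_t, v_t}$, which is roughly $\cN(0,1/d)$. Then $\jiao{\theta^\star,a_t} = m_t\sig + \sigma_t\sqrt{1-m_t^2}\,\xi_t$. On the event that $m_t$ and $\sig$ both lie above $1-\gamma_0$, take $\sigma_t^2 \le c\gamma_0$ for a small constant $c$; this keeps $\sigma_t\sqrt{1-m_t^2}|\xi_t|$ small, so all arguments of $f$ stay in the linear region $[1-\gamma_0,1]$. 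A mean-value expansion then gives
\[ -\bE[\jiao{g_t,\theta^\star}\mid \mathcal{F}_t] \ge c\,\gamma_1^2\,\sigma_t^2\,(1-m_t^2)/d, \]
where the $1/d$ comes from $\bE\xi_t^2$. The competing term $f(\sig) - f(m_t\sig)$ has the favourable sign because $f$ is monotone. The resulting recursion is
\[ \bE[u_{t+1}\mid\mathcal{F}_t] \le u_t\big(1 - c\,\eta_t\sigma_t^2/d\big) + C\eta_t^2\sigma_t^2 , \]
where the last term uses $\bE\|g_t\|^2 \lesssim \sigma_t^2$, which holds because the noise has unit variance and $f'$ is bounded. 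The martingale term $\eta_t\jiao{g_t,\theta^\star}$ has conditional standard deviation of order $\eta_t\sigma_t\sqrt{u_t/d}$ and is subGaussian.

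\textbf{Step 3: solving the recursion in each regime.}

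\emph{Pure exploration.} With $\sigma_t^2 = \Theta(1)$ and $\eta_t \asymp d/t$, the recursion reads $u_{t+1} \le (1 - c'/t)u_t + Cd^2/t^2$. Choosing the constant so that $c' > 1$ gives $u_t \lesssim d^2/t$. Early on, the cap $\eta_t \le 1/d$ yields $u_t \le (1 - c/d^2)u_t + C/d^2$, which is contractive for $u_t \gtrsim \text{const}$. Thus $u_T \le \varepsilon$ once $T = \widetilde{O}(d^2/\varepsilon)$.

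\emph{Regret minimization.} With $\eta_t = 1/\sqrt t$ and $\sigma_t^2 = d/\sqrt t$, the recursion is $u_{t+1} \le (1 - c/t)u_t + Cd/t^{3/2}$, so $u_t \lesssim d/\sqrt t$, again with a suitable constant. The regret per round is $f(1) - f(\jiao{\theta^\star,a_t})$, which by local linearity is at most $\gamma_2(1-\jiao{\theta^\star,a_t}) \lesssim u_t + \sigma_t^2$. (The statement sums $f(1) - f(m_t) \le \gamma_2 u_t$, which is even easier.) Summing gives $\sum_{t\le T} d/\sqrt t = O(d\sqrt T)$. The logarithmic factors absorb the constants needed for $c > 1$ in both regimes.

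\textbf{Step 4: main obstacle.} The hardest part is turning these expectation recursions into high-probability statements while keeping the process inside the good region $m_t \ge 1-\gamma_0/2$. I would use a stopping time $\tau$, the first exit from this region, and apply Freedman's inequality to the stopped martingale $\sum_s \prod_{r>s}(1 - c\eta_r\sigma_r^2/d)\,\eta_s(\jiao{g_s,\theta^\star} - \bE[\cdot])$. Its variance proxy is proportional to $\eta_s^2\sigma_s^2 u_s/d$, which is self-bounding in $u_s$. Combined with per-step truncation of $\varepsilon_t$ at level $\sqrt{\log(1/\delta)}$, which costs $\delta$ per round and explains the $1-\delta T$ probability, this shows that $u_t$ stays within a log factor of its deterministic bound, and in particular never reaches $\gamma_0/2$. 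Hence $\tau > T$ on this event. A secondary technical point is controlling the tail of $\xi_t$ so that the arguments of $f$ remain in $[1-\gamma_0,1]$; this follows from a union bound using $|\xi_t| \lesssim \sqrt{\log(1/\delta)/d}$.
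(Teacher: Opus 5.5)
Your proposal is correct in outline but takes a genuinely different route from the paper. The paper never solves a recursion for $1-m_t$. It runs epochs with a \emph{constant} step $\eta=c\varepsilon/(d\iota)$, with $\sigma^2=\gamma_0$ for pure exploration and $\sigma^2=\varepsilon$ for regret. Within an epoch it does three things:
\begin{itemize}
\item it discards the drift as nonnegative and shows that the summed martingale and normalization errors never pull $m_s$ down by more than $\varepsilon/4$;
\item it argues that if $m_s$ has not reached $1-\varepsilon/4$ within $\Delta$ steps, the accumulated drift $\Omega(\Delta\eta\sigma^2\varepsilon/d)=\Omega(C\varepsilon)$ would have pushed it there;
\item after that hitting time, it reuses the first bound to keep $m\ge 1-\varepsilon/2$.
\end{itemize}
Halving $\varepsilon$ epoch by epoch is what produces $\eta_t\approx d/t$, resp.\ $\eta_t\approx 1/\sqrt t$ and $\sigma_t^2\approx d/\sqrt t$. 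Your Chung-lemma treatment, with a smoothly decaying step and a weighted Freedman bound on the stopped martingale, is the more classical stochastic-approximation argument. It buys an anytime bound $1-m_t\lesssim d^2/t$ (resp.\ $d/\sqrt t$) along the whole trajectory. It also needs two things the epoch argument does not. First, the contraction coefficient must exceed a threshold ($c'>1$ for $\eta_t\asymp d/t$, $c'>1/2$ for the regret schedule), so the constant in $\eta_t$ must be tuned against $\gamma_1$ and $\sigma$. Second, you must handle endpoint-dependent weights plus a bootstrap on the self-bounding variance proxy. The paper only needs an unweighted (self-normalized) martingale bound over a single epoch.

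One step needs tightening. With the cap literally $\eta_t=1/d$, the fixed point of your recursion is $u^\ast\asymp C\eta_t d/c\asymp \gamma_2^2/\gamma_1^2$, and this is all the argument controls. After truncating the noise it picks up a further $\log(1/\delta)$. This can far exceed $\gamma_0/2$, so ``$u_t$ stays within a log factor of its deterministic bound'' does not keep you in the region $m_t\ge 1-\gamma_0/2$. You must take the cap as $\eta_t\le \kappa/(d\log(T/\delta))$, with $\kappa$ a small multiple of $\gamma_0\gamma_1^2/\gamma_2^2$, so that both the fixed point and the fluctuations are $o(\gamma_0)$. This is exactly the role of $\iota$ and of $\varepsilon\le\gamma_0/4$ in the paper's $\eta=c\varepsilon/(d\iota)$, and the $\widetilde\Theta(1/d)$ in the statement allows it.

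Separately, the tail bound on $\xi_t$ in your last step is unnecessary. A union bound over realized $\xi_t$ would not control a conditional expectation anyway. For $m_t\ge 1-\gamma_0/2$ and $\sigma_t^2\le c\gamma_0$, the fact $|\xi_t|\le 1$ already gives $\langle\theta^\star,a_t\rangle\ge\sqrt{1-\sigma_t^2}\,m_t-\sigma_t\sqrt{1-m_t^2}\ge 1-\gamma_0$ deterministically.
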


Both upper bounds in \Cref{thm:learning} are near-optimal and match the lower bounds $\Omega(\frac{d^2}{\varepsilon})$ and $\Omega(d\sqrt{T})$ for the respective tasks, shown in Theorem 1.7 of \cite{rajaraman2024statistical}. In other words, SGD with proper learing rate and exploration schedules achieves an optimal learning performance in the learning phase, given a ``warm start'' $\theta_1$ with $\jiao{\theta_1, \theta^\star}\ge 1-\gamma_0/2$. To search for this ``warm start'' through the burn-in phase, we additionally make \emph{one} of the following assumptions. 
\begin{assumption}\label{assump:GLB}
There is an absolute constant $c_0>0$ such that $f'(x)\ge c_0$ for all $x\in [0,1]$.
\end{assumption}
\begin{assumption}\label{assump:convex}
The link function $f$ is convex on $[0,1]$. 
\end{assumption}
Specifically, \Cref{assump:GLB} and \ref{assump:convex} cover two different regimes of generalized linear bandits: \Cref{assump:GLB} corresponds to the classical ``linear bandit'' regime studied in \cite{filippi2010parametric,russo2014learning}, and \Cref{assump:convex} covers the case with a long burn-in period where $f'(x)$ is small at the beginning, e.g. in \cite{lattimore2021bandit,huang2021optimal}. We will discuss the challenges in dropping the convexity assumption for the SGD analysis in \Cref{sec:discussion}. 

Under \Cref{assump:GLB} or \ref{assump:convex}, our next result characterizes the SGD dynamics in the burn-in phase. 

\begin{theorem}[Burn-in Phase]\label{thm:burn-in}
Let $\delta>0$, and \cref{assump:f} hold. Let $(a_t, \theta_t)_{t\ge 1}$ be given by the SGD evolution in \eqref{eq:exploration} and \eqref{eq:SGD-decision-making}, with an initialization $\theta_1$ such that $\jiao{\theta_1, \theta^\star} \ge \frac{1}{\sqrt{d}}$. 
\begin{enumerate}
    \item Under \Cref{assump:GLB}, by choosing $\eta_t = \widetilde{\Theta}(\frac{1}{d^2})$ and $\sigma_t^2 = \Theta(1)$, it holds that $m_T \ge 1-\gamma_0/4$ with probability at least $1-\delta T$, where $T = \widetilde{O}(d^2)$.
    \item Under \Cref{assump:convex}, by choosing an appropriate learning rate schedule (cf. \Cref{lemma:local-improvement-learning-cvx}) and $\sigma_t^2 = \Theta(1)$, it holds that $m_T \ge 1-\gamma_0/4$ with probability at least $1-\delta T$, where
    \begin{align*}
        T = \widetilde{O}\bpth{ d^2\int_{1/(2\sqrt{d})}^{1-\gamma_0/4} \frac{m}{f'(m)^2}\rmd m}. 
    \end{align*}
\end{enumerate}
\end{theorem}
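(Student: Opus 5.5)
The plan is to track the scalar correlation $m_t=\jiao{\theta^\star,\theta_t}$ as a one-dimensional stochastic process. We decompose its increment into a drift term, a martingale-difference term, and a second-order normalization term, and then run a stopping-time argument in which $m_t$ climbs from $1/\sqrt d$ to $1-\gamma_0/4$.

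\textbf{Drift computation.} Write $\theta^\star = m_t\theta_t + \sqrt{1-m_t^2}\,v_t$ with $v_t\perp\theta_t$. Then
\[
\jiao{\theta^\star,a_t}=\sqrt{1-\sigma_t^2}\,m_t+\sigma_t\sqrt{1-m_t^2}\,\jiao{v_t,Z_t}.
\]
Since $\jiao{\theta_t,a_t}=\sqrt{1-\sigma_t^2}$, the factor $f'(\jiao{\theta_t,a_t})=\fsig$ is deterministic. Also $(I-\theta_t\theta_t^\top)a_t=\sigma_t Z_t$. Hence
\[
\jiao{\theta^\star,\theta_{t+1/2}}-m_t=\eta_t\,\sigma_t\,\fsig\bigl(f(\jiao{\theta^\star,a_t})+\varepsilon_t-f(\sig)\bigr)\sqrt{1-m_t^2}\,\jiao{v_t,Z_t}.
\]
Set $\sigma_t^2=\Theta(1)$ and use $\jiao{v_t,Z_t}\approx \cN(0,1/d)$. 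Stein's lemma (or a Taylor expansion in the small variable $\jiao{v_t,Z_t}$) gives a conditional mean of order
\[
\eta_t\,\sigma_t^2\,\fsig\,(1-m_t^2)\,\bE\qth{f'(\jiao{\theta^\star,a_t})}/d .
\]
Under \Cref{assump:GLB} or \Cref{assump:convex}, together with monotonicity, this is $\gtrsim \eta_t(1-m_t^2)f'(c\,m_t)/d$. Monotonicity of $f'$ under convexity lets us lower bound the expectation by $f'$ at a point of order $m_t$, using the positive-probability event $\jiao{v_t,Z_t}\ge 0$. The $f'(\sqrt{1-\sigma_t^2})$ factor is $\Omega(1)$ by choosing $\sqrt{1-\sigma_t^2}\ge 1-\gamma_0$, where \Cref{assump:f} gives $f'\ge\gamma_1$. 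The conditional variance of the increment is $O(\eta_t^2/d)$. The normalization step $\|\theta_{t+1/2}\|^2=1+\eta_t^2\,O(\|\text{grad}\|^2)$ contributes a negative drift of $O(\eta_t^2 m_t)$, because the squared gradient norm is $O(1)$ (rewards are bounded and $\varepsilon_t$ is subGaussian, truncated at $\sqrt{\log(1/\delta)}$).

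\textbf{Phase-wise martingale argument.} The drift condition is $\eta_t f'(m)/d\gg \eta_t^2 m$, i.e.\ $\eta_t\lesssim f'(m)/(dm)$. The fluctuation condition is that the accumulated noise over a phase stays below $m$.

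Under \Cref{assump:GLB}, with $\eta=\widetilde\Theta(1/d^2)$, the drift per step is $\eta c_0/d\asymp 1/d^3$ and the noise per step is $\eta/\sqrt d$. Split $[1/\sqrt d,1-\gamma_0/4]$ into dyadic levels $m\in[2^k/\sqrt d,2^{k+1}/\sqrt d]$. Doubling from level $m$ takes $\asymp m d^3$ steps, i.e.\ $d^3 m\,\eta\asymp dm$ in "time". Freedman's inequality bounds the martingale by $\sqrt{\tau\eta^2/d}\,\log$, which equals $m\cdot\sqrt{d/m}/d$ up to logs and is $\le m/4$ for the right polylog choice. A union bound over steps gives the $1-\delta T$ probability.

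\textbf{Step count.} Summing the steps gives $T=\widetilde O(d^2)$ once the constants are set consistently. In the convex case, take the schedule $\eta_t=\widetilde\Theta(f'(m)^2/(d^2))$ within the phase at level $m$ (equivalently, \Cref{lemma:local-improvement-learning-cvx}). The number of steps to move $\Delta m\asymp m$ is then
\[
m\cdot\frac{d}{\eta f'(m)}=\frac{d^2 m}{f'(m)^2}\cdot\frac{1}{\ldots},
\]
and summing over phases yields the Riemann sum of $d^2\int m/f'(m)^2\,\dd m$. A lower "reflecting" control is also needed: $m_t$ must not fall below $1/(2\sqrt d)$. This is handled by the same Freedman bound started at $1/\sqrt d$, which is why the integral begins at $1/(2\sqrt d)$.

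\textbf{Main obstacle.} I expect the main obstacle to be the lower bound on $\bE f'(\jiao{\theta^\star,a_t})$ uniformly in $m_t$, together with calibrating $\eta_t$ so that the variance and normalization terms are dominated at small $m$, where the drift $f'(m)$ is tiny. This is the step where convexity (monotone $f'$) is essential.
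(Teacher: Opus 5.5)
Your ingredients match the paper's: the drift via spherical Stein restricted to $X\ge 0$, a subexponential martingale term of scale $\eta_t/\sqrt d$, a multiplicative normalization loss of order $\eta_t^2 m_t$, and $\sigma^2=\gamma_0$ so that $f'(\sqrt{1-\sigma^2})\ge\gamma_1$. You even isolate the binding constraint $\eta_t\lesssim f'(m)/(dm)$. \textbf{The gap is that the rates you then choose do not give the claimed $T$, and the step counts are asserted rather than computed.}

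Under \Cref{assump:GLB}, $\eta=\widetilde{\Theta}(d^{-2})$ gives drift $\asymp d^{-3}$ per step, so reaching $m=\Theta(1)$ takes $\Theta(d^3)$ steps. Your own dyadic count $\sum_k m_k d^3$ is dominated by the top level and equals $\Theta(d^3)$; no choice of constants turns this into $\widetilde O(d^2)$. The paper's \Cref{lemma:local-improvement-burnin-GLB} actually uses $\eta=c/(d\iota)$ with $\iota=\log^2(d/\delta)$. This is the largest constant rate your normalization constraint allows up to $m\asymp 1$, and the ``$\widetilde{\Theta}(1/d^2)$'' in the theorem statement is inconsistent with it. With this rate, $T=Cd/\eta=\widetilde O(d^2)$, and the per-step drift $\Omega(\eta/d)$ dominates the normalization error $O(\eta^2\log(1/\delta))$. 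The martingale over the whole horizon is $O(\sqrt{\eta}\log(T/\delta))\le \frac{1}{2\sqrt d}$ for small $c$. A single induction then gives $m_t\ge \frac{1}{2\sqrt d}+\Omega(\eta(t-1)/d)$.

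In the convex case, $\eta\asymp f'(m)^2/d^2$ gives $md/(\eta f'(m))=md^3/f'(m)^3$ steps per dyadic level, instead of the required $\widetilde O(d^2m^2/f'(m)^2)$. For $f(x)=x^3$ at $m=d^{-1/2}$ this is of order $d^{11/2}$ against a target of $d^3$; the factor ``$\frac{1}{\ldots}$'' in your display hides exactly this. The correct rate is again the normalization-limited one, $\eta_k=c f'(\underline{m}_k)/(\iota d\,\underline{m}_k)$ on the grid $\underline{m}_k=(1-\gamma_0)^2\sqrt{k/d}$. This gives $\Delta_k\asymp \iota d^2\underline{m}_k(\underline{m}_{k+1}-\underline{m}_k)/f'(\underline{m}_k)^2$ steps per epoch. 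Their sum is a left-endpoint Riemann sum for $d^2\int m/f'(m)^2\,\rmd m$, and the lower limit $1/(2\sqrt d)$ just absorbs this comparison, using that $f'$ is nondecreasing.

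\textbf{Second, a rate chosen ``within the phase at level $m$'' cannot be implemented, because $m_t$ is never observed.} The schedule must be a deterministic function of $t$. The paper therefore uses epochs of fixed length $\Delta_k$ with constant $\eta_k$, and proves $m_{t+\Delta_k}\ge \overline{m}_{k+1}$ at the \emph{end} of each epoch. It splits on the stopping time $T_0$ at which $m_s$ first reaches $(1-\gamma_0/8)\sqrt{(k+1)/d}$.
\begin{itemize}
\item If $T_0>t+\Delta_k$, the drift bound $\Omega(\eta_k f'(\underline{m}_k)/d)$ holds throughout. The multiplicative normalization loss is absorbed by unrolling $m_{r+1}\ge \beta(m_r+\text{drift}+A_r)$ with $\beta=1-\Cn\gamma_2^2\eta^2\sigma^2\log(4\Delta/\delta)$. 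This turns the noise into a $\beta^{-1}$-weighted martingale, controlled by \Cref{lemma:self-norm-mart-conc-applied} using $\beta^{-\Delta}\le 2$.
\item If $T_0\le t+\Delta_k$, only nonnegativity of the drift is used. One shows that the later noise and normalization losses cannot pull $m$ back below $\overline{m}_{k+1}$.
\end{itemize}
Your hitting-time phases need exactly this ``hold after hitting'' step. The GLB case needs it too, to conclude at the fixed time $T$; this is the paper's Regime II after $m$ first reaches $1-\gamma_0/8$. With normalization-limited rates and deterministic epochs analyzed in these two cases, your outline becomes the paper's proof.
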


Note that for $\theta_1\sim \Unif(\bS^{d-1})$, the condition $\jiao{\theta^\star,\theta_1}\ge 1/\sqrt{d}$ is fulfilled with a constant probability. A simple hypothesis testing subroutine in \cite[Lemma 3.1]{rajaraman2024statistical} could further certify it using $\widetilde{O}((f(1/\sqrt{d})-f(0))^{-2})$ samples. Therefore, combining \Cref{thm:learning} and \ref{thm:burn-in}, we have the following corollary on the overall complexity of SGD. 

\begin{corollary}[Overall sample complexity and regret]\label{cor:complexity}
Under \Cref{assump:f} and \Cref{assump:GLB} or \ref{assump:convex}, the SGD evolution in \eqref{eq:exploration} and \eqref{eq:SGD-decision-making} with proper $(\eta_t, \sigma_t)_{t\ge 1}$ and a hypothesis testing subroutine for initialization satisfies the following: 
\begin{enumerate}
    \item (Pure exploration) For $\varepsilon, \delta>0$, $m_T\ge 1-\varepsilon$ with probability at least $1-\delta T$, where
    \begin{align*}
    T = \widetilde{O}\bpth{ d^2\int_{1/(2\sqrt{d})}^{1-\gamma_0/4} \frac{m}{f'(m)^2}\rmd m + \frac{d^2}{\varepsilon} }. 
    \end{align*}
    \item (Regret minimization) For $\delta>0$, with probability at least $1-\delta T$, the regret satisfies
    \begin{align*}
    \sum_{t=1}^T (f(1) - f(m_t)) =  \widetilde{O}\bpth{ \min\bsth{T, d^2\int_{1/(2\sqrt{d})}^{1-\gamma_0/2} \frac{m}{f'(m)^2}\rmd m + d\sqrt{T} }}.
    \end{align*}
\end{enumerate}
\end{corollary}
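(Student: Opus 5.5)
The corollary follows by concatenating three stages, each controlled by a result stated earlier, and then doing the bookkeeping for sample count and regret.

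\emph{Stage 0 (initialization).} Draw $\theta_1\sim\Unif(\bS^{d-1})$. With constant probability $\jiao{\theta_1,\theta^\star}\ge 1/\sqrt d$. The hypothesis testing subroutine of \cite[Lemma 3.1]{rajaraman2024statistical} certifies this with $\widetilde O((f(1/\sqrt d)-f(0))^{-2})$ samples per attempt. To reach confidence $\delta$ we repeat $O(\log(1/\delta))$ times.

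We need this cost to be dominated by the burn-in integral. Under \Cref{assump:convex}, $f'$ is non-decreasing on $[0,1]$, so for $m\in[\tfrac{1}{2\sqrt d},\tfrac1{\sqrt d}]$ we have $f'(m)\le f'(1/\sqrt d)$. Convexity also gives $f(1/\sqrt d)-f(0)\le f'(1/\sqrt d)/\sqrt d$. Hence
\begin{align*}
d^2\int_{1/(2\sqrt d)}^{1/\sqrt d}\frac{m}{f'(m)^2}\,\rmd m \;\gtrsim\; \frac{d}{f'(1/\sqrt d)^2} \;\ge\; \frac{1}{(f(1/\sqrt d)-f(0))^2}.
\end{align*}
The lower limit $1/(2\sqrt d)$ in the integral leaves exactly this margin. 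Under \Cref{assump:GLB}, $f(1/\sqrt d)-f(0)\ge c_0/\sqrt d$, so the testing cost is $O(d)\le d^2$.

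\emph{Stage 1 (burn-in).} Starting from the certified $\theta_1$, run SGD with the schedules of \Cref{thm:burn-in}. After $T_1=\widetilde O\big(d^2\int_{1/(2\sqrt d)}^{1-\gamma_0/4} m f'(m)^{-2}\,\rmd m\big)$ steps under \Cref{assump:convex}, we get $m_{T_1}\ge 1-\gamma_0/4$ with probability $1-\delta T_1$. Under \Cref{assump:GLB} the cost is $\widetilde O(d^2)$. This is also dominated by the integral, since $f'\le$ a constant on the relevant range (by local linearity, and for a Lipschitz $f$), making the integral $\Omega(1)$.

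\emph{Stage 2 (learning).} Restart the schedules of \Cref{thm:learning} with $\theta_{T_1}$ as the new warm start. This requires that \Cref{thm:learning} holds from time $1$ of the new clock, which it does since it only assumes $\jiao{\theta_1,\theta^\star}\ge 1-\gamma_0/4$.
\begin{itemize}[label={}]
\item For pure exploration, a further $\widetilde O(d^2/\varepsilon)$ steps give $m_T\ge 1-\varepsilon$. Adding the stages and taking a union bound over failure events (with $\delta$ rescaled by logs) yields part 1.
\item For regret, bound the per-round regret by $f(1)-f(-1)\le 2$ during Stages 0–1, contributing $O(T_0+T_1)$. Stage 2 contributes $\widetilde O(d\sqrt T)$. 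The trivial bound $2T$ gives the $\min$.
\end{itemize}

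One subtlety is that the regret is stated in terms of $f(m_t)$ rather than the played $f(\jiao{\theta^\star,a_t})$. I will follow the statement's convention, which \Cref{thm:learning} already uses. The integral's upper limit is written as $1-\gamma_0/2$ in part 2, and I will bound it by the $1-\gamma_0/4$ integral, absorbing the difference $\int_{1-\gamma_0/2}^{1-\gamma_0/4}$ as $O(d^2)$ via $f'\ge\gamma_1$ there. I am treating $O(d^2)$ as absorbed because $d^2\le d\sqrt T$ whenever $T\ge d^2$; otherwise the $T$ branch of the $\min$ applies.

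The main obstacle is the Stage 0 domination argument above, specifically relating the testing cost $(f(1/\sqrt d)-f(0))^{-2}$ to the burn-in integral. My plan is to use convexity to bound the increment of $f$ by the derivative at the right endpoint. Everything else is union bounds and summation.
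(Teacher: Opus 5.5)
\textbf{Gap in Stage 0.} Your overall plan matches the paper's: concatenate \Cref{thm:burn-in} and \Cref{thm:learning}, then show that the initialization cost and the $\widetilde O(d^2)$ burn-in cost under \Cref{assump:GLB} are dominated by the integral. The problem is the step you flag as the main obstacle, where the inequality runs the wrong way.

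The supporting-line bound $f(1/\sqrt d)-f(0)\le f'(1/\sqrt d)/\sqrt d$ is correct. Squaring and inverting it, however, gives $(f(1/\sqrt d)-f(0))^{-2}\ge d/f'(1/\sqrt d)^2$, which is a \emph{lower} bound on the testing cost. Your other estimate, $d^2\int_{1/(2\sqrt d)}^{1/\sqrt d} m f'(m)^{-2}\,\rmd m\gtrsim d/f'(1/\sqrt d)^2$, is also a lower bound. Two lower bounds on the two quantities cannot be compared, so nothing follows.

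The inequality you assert, $d/f'(1/\sqrt d)^2\ge (f(1/\sqrt d)-f(0))^{-2}$, is false in general. Take $f(x)=\max(0,x-a)$ (or a smoothing of it) with $a$ slightly below $1/\sqrt d$. This $f$ satisfies \Cref{assump:f} and \Cref{assump:convex}. The left side equals $d$, while the right side equals $d/(1-a\sqrt d)^2$, which is arbitrarily larger. Bounding the increment of $f$ by a derivative at the right endpoint can only bound it from above. What you need is a \emph{lower} bound on the increment.

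\textbf{The missing idea.} The paper uses only monotonicity:
\[
f(1/\sqrt d)-f(0)\ \ge\ f(1/\sqrt d)-f(1/(2\sqrt d))\ =\ \int_{1/(2\sqrt d)}^{1/\sqrt d}f'(m)\,\rmd m .
\]
It then applies Jensen's inequality, for the convex map $x\mapsto x^{-2}$, to the average of $f'$ over this interval of length $1/(2\sqrt d)$. This gives
\[
(f(1/\sqrt d)-f(0))^{-2}\ \le\ 8d^{3/2}\int_{1/(2\sqrt d)}^{1/\sqrt d}f'(m)^{-2}\,\rmd m .
\]
Finally, using $m\ge 1/(2\sqrt d)$ on the interval, the right side is at most $16d^2\int_{1/(2\sqrt d)}^{1/\sqrt d} m f'(m)^{-2}\,\rmd m$. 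This is the real role of the lower limit $1/(2\sqrt d)$. The argument covers both assumptions at once, so your separate (correct) \Cref{assump:GLB} computation of the testing cost is not needed.

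\textbf{Minor point.} The rest of your bookkeeping is fine, including the treatment of the $1-\gamma_0/2$ upper limit and the $\min$ with $T$. For the $\Omega(1)$ lower bound on the integral you do not need $f$ to be Lipschitz. It suffices that $f'\le\gamma_2$ on $[1-\gamma_0,1-\gamma_0/4]$ and that the integrand is nonnegative, which gives $\Omega(d^2)$ directly, as in the paper.
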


Under \Cref{assump:GLB}, \Cref{cor:complexity} yields an overall sample complexity bound $\widetilde{O}(d^2/\varepsilon)$ and a regret bound $\widetilde{O}(\min\{T, d\sqrt{T}\})$, both of which are known to be near-optimal, e.g., in the case of linear bandits \cite{lattimore2020bandit, wagenmaker2022reward}. Under \Cref{assump:convex}, the upper bounds in \Cref{cor:complexity} also match the best known guarantees in \cite{rajaraman2024statistical}, using a different algorithm based on successive hypothesis testing. In the special case $f(x) = x^p$ with odd $p \ge 3$, SGD achieves a regret bound $\widetilde{O}(\min\{T, d^p + d\sqrt{T}\})$, which is near-optimal \cite{huang2021optimal, rajaraman2024statistical}. By contrast, many other approaches, including all non-interactive algorithms (in particular, non-interactive SGD) and UCB-based methods, provably incur a larger burn-in cost of $\widetilde{\Omega}(d^{p+1})$ \cite{rajaraman2024statistical}. Therefore, it is striking that SGD attains optimal performance even in the burn-in phase, while simultaneously staying optimal in the learning phase. Taken together, these results highlight SGD as a natural, efficient, and highly competitive algorithm with near-optimal statistical guarantees for learning single-index models in the interactive setting. As a numerical example, \Cref{fig:correlation} illustrates the evolution of $m_t$ obtained by SGD with a constant learning rate for the cubic link $f(x)=x^3$, shown either as an average over $100$ runs (left panel) or as a single trajectory (right panel). We observe that, despite the non-convex loss landscape and potentially non-monotonic progress, SGD consistently delivers strong performance during both the burn-in and learning phases.

\begin{figure*}[t!]
    \centering
    \begin{subfigure}[t]{0.49\linewidth}
        \centering
        \includegraphics[width=\linewidth]{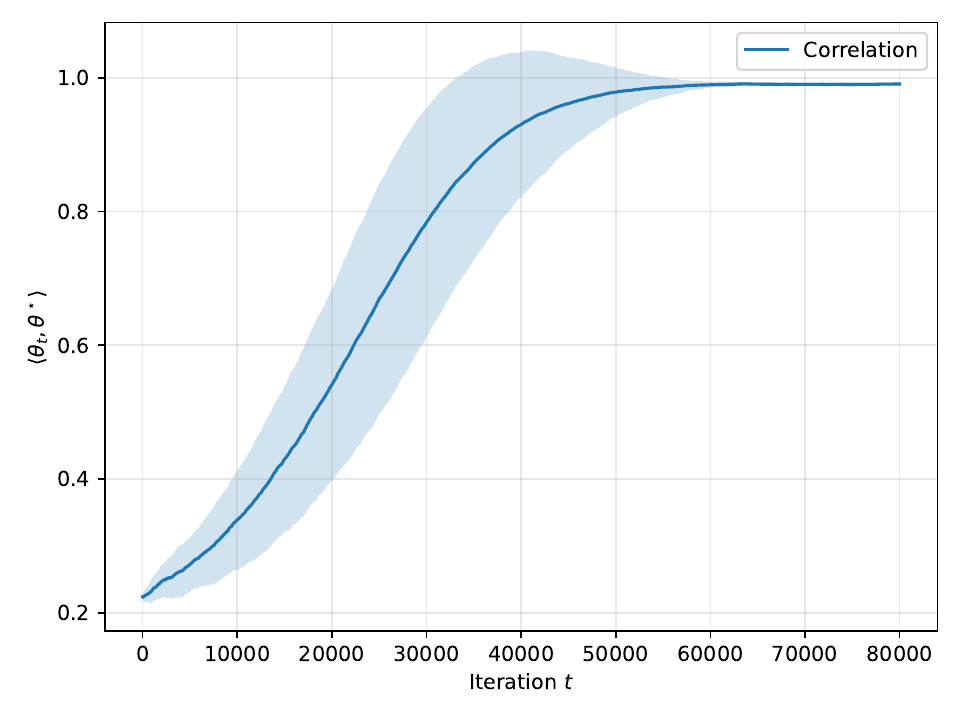}
        \caption{Evolution of $m_t=\jiao{\theta^\star,\theta_t}$ averaged across $100$ runs. The shaded region plots one standard deviation.}
    \end{subfigure}%
    ~ 
    \begin{subfigure}[t]{0.49\linewidth}
        \centering
        \includegraphics[width=\linewidth]{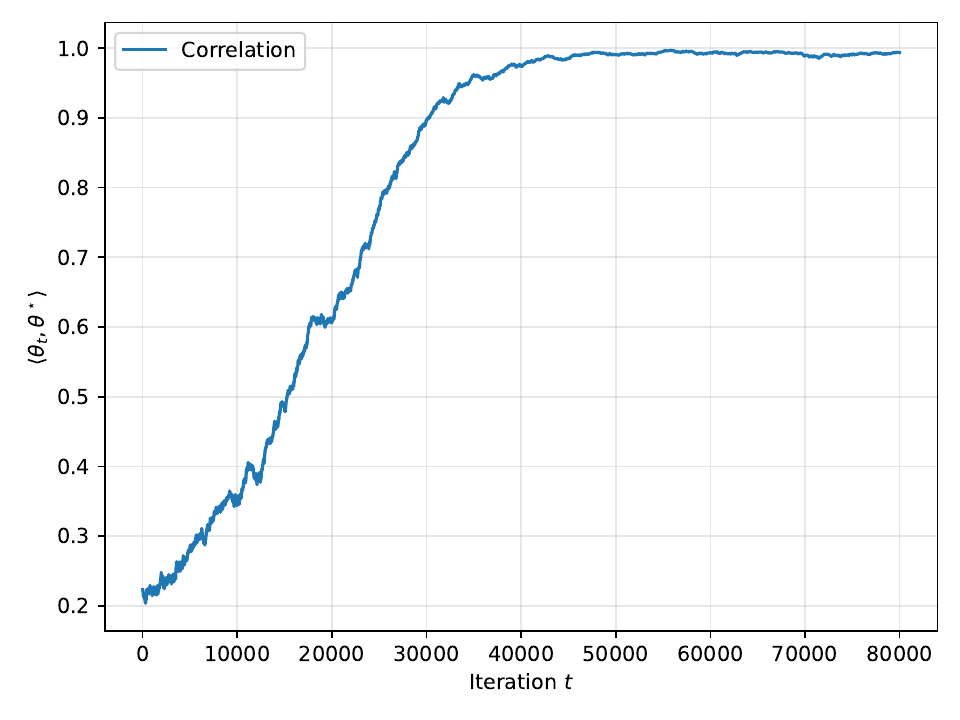}
        \caption{A single trajectory of $m_t=\jiao{\theta^\star,\theta_t}$.}
    \end{subfigure}
    \caption{Correlation $m_t = \langle \theta_t, \theta^\star \rangle$ plotted as a function of $t$ in $d=20$ dimensions, for the cubic link $f(x)=x^3$. We run interactive SGD with a constant learning rate $\eta_t = 0.002$ for all $t$, using an exploration schedule with $\sigma_t= 0.5$ until $m_t$ reaches $0.7$ and $\sigma_t=0.2$ thereafter.}\label{fig:correlation}
\end{figure*}

\subsection{Related work}
\paragraph{Single-index models.} Analyzing feature learning in non-linear functions of low-dimensional features has a long history. The approximation and statistical aspects are well understood in \cite{barron2002universal,bach2017breaking}; by contrast, the computational aspects remain more challenging, and positive results typically require additional assumptions on the link function and/or the data distribution. Focusing on the link function $f$, a rich line of work \cite{kalai2009isotron,shalev2010learning,kakade2011efficient,soltanolkotabi2017learning,frei2020agnostic,yehudai2020learning,wu2022learning} has exploited its monotonicity or invertibility to obtain efficient learning guarantees under broad distributional assumptions. At the other end of the spectrum, the seminal works \cite{dudeja2018learning,arous2021online} developed a harmonic-analysis framework for studying SGD under Gaussian data, sparking extensive follow-up research \cite{abbe2022merged,bietti2022learning,damian2022neural,arous2022high,abbe2023sgd,zweig2023single,damian2024computational,arous2024high,bietti2023learning}.

A representative finding for the single-index model is that the sample complexity of SGD is governed by the \emph{information exponent} of the link function, i.e., the index of its first non-zero Hermite coefficient. In the interactive setting, however, where the data distribution is no longer i.i.d., the information exponent ceases to be an informative measure of SGD’s performance. We defer more discussions to \Cref{sec:discussion}.

\paragraph{Generalized linear bandits.} The most canonical examples of generalized linear bandits are linear bandits \cite{dani2008stochastic,rusmevichientong2010linearly,chu2011contextual} and ridge bandits with $0 < c_1 \le |f'(\cdot)| \le c_2$ everywhere. In both cases, the minimax regret is $\widetilde{\Theta}(d\sqrt{T})$ \cite{filippi2010parametric,abbasi2011improved,russo2014learning}, attained by algorithms such as LinUCB and information-directed sampling. For more challenging convex link functions, the special cases $f(x) = x^2$ and $f(x) = x^p$ with $p \ge 2$ were analyzed in \cite{lattimore2021bandit,huang2021optimal}, using either successive searching algorithms or noisy power methods. These results were substantially generalized by \cite{rajaraman2024statistical}, which identified the existence of a general burn-in period and established tight upper and lower bounds on the optimal burn-in cost via differential equations. In particular, their upper bound strengthens \Cref{cor:complexity} in the absence of convexity, using an refined algorithm of \cite{lattimore2021bandit} during the burn-in phase and an ETC (explore-then-commit) algorithm in the learning phase. By contrast, we show that a single, much simpler SGD algorithm achieves the same upper bound for convex $f$.

A related line of work \cite{fan2023understanding,kang2025single} studied the single-index model with an unknown link function, where the central idea is to estimate the score function. Their resulting algorithms are of the ETC type, and the regret guarantees rely on a positive lower bound for $f'$.


\paragraph{Gradient descent in online learning and bandits.} Gradient and mirror descent are classical algorithms in online settings (including online learning and online convex optimization \cite{cesa2006prediction,hazan2016introduction,orabona2019modern}), as well as in bandit problems with gradient estimation, such as EXP3 for adversarial multi-armed bandits and FTRL for adversarial linear bandits \cite{lattimore2020bandit}. A distinct feature of our work is that our SGD remains a first-order method even in this bandit problem, in contrast to the zeroth-order stochastic optimization usually used for single-index models such as \cite{huang2021optimal}. Moreover, the SGD dynamics for single-index models demand a more fine-grained analysis than that required by standard online learning guarantees. Further details are provided in \Cref{sec:discussion}.

\subsection{Organization}
The rest of this paper is organized as follows. In \Cref{sec:analysis-general} we present a general analysis of the SGD update, including bounds on the mean drift, stochastic term, and normalization error. In \Cref{sec:learning} and \ref{sec:burn-in}, we analyze the learning and burn-in phases, respectively. Additional discussion is provided in \Cref{sec:discussion}, and detailed proofs are deferred to the appendix.

\section{Analysis of the SGD update}\label{sec:analysis-general}
To establish our main results \Cref{thm:learning} and \ref{thm:burn-in}, we first understand the properties of each SGD update in \eqref{eq:exploration} and \eqref{eq:SGD-decision-making}. At each time step $t$, the improvement on the correlation from $m_t:=\jiao{\theta^\star,\theta_t}$ to $m_{t+1}:=\jiao{\theta^\star, \theta_{t+1}}$ consists of three parts: 
\begin{enumerate}
    \item \emph{Drift}: the mean improvement $\bE[m_{t+1/2}|\calF_t] - m_t$ of the descent step in \eqref{eq:SGD-decision-making}, where $m_{t+1/2}:=\jiao{\theta^\star,\theta_{t+1/2}}$, and $\calF_t$ denotes all historic observations up to the end of time $t$. 
    \item \emph{Martingale difference}: the stochastic term $m_{t+1/2} - \bE[m_{t+1/2}|\calF_t]$ with zero mean.
    \item \emph{Normalization error}: the difference $m_{t+1}-m_{t+1/2}$ due to the normalization step in \eqref{eq:SGD-decision-making}. 
\end{enumerate}

We will present generic lemmas to bound each of the above terms in this section, and use them to analyze the learning and burn-in phases in the next two sections. We start from the drift.

\begin{lemma}[Drift]
\label{lemma:drift}
Let $d\ge 3$. The following identity holds for the drift: 
\begin{align*}
    &\mathbb{E} [ m_{t+1/2} | \mathcal{F}_t ] - m_t \\
    &= \frac{\eta_t \sigma_t^2}{d-2}\fsig(1-m_t^2) \cdot \mathbb{E} \left[ f' \left( \sig m_t + \sigma_t \sqrt{1-m_t^2} X \right)(1-X^2) \middle| \mathcal{F}_t \right]. 
\end{align*}
where $X$ follows the one-dimensional marginal of the uniform distribution over $\mathbb{S}^{d-2}$. In particular, if $m_t>0$, 
\begin{align*}
\mathbb{E} [ m_{t+1/2} | \mathcal{F}_t ] - m_t \ge \Cd \begin{cases}
\frac{\eta_t \sigma_t^2}{d}\fsig(1-m_t^2) & \text{under \Cref{assump:GLB}} \\
\frac{\eta_t \sigma_t^2}{d}\fsig(1-m_t^2)f'(\sqrt{1-\sigma_t^2}m_t) & \text{under \Cref{assump:convex}}
\end{cases}, 
\end{align*}
for a universal constant $\Cd >0$. 
\end{lemma}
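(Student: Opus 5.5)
The plan is a direct computation of the conditional expectation, followed by a Stein-type integration-by-parts identity for the spherical marginal. Write $s:=\sqrt{1-\sigma_t^2}$ and $m:=m_t$.

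\textbf{Reducing to a one-dimensional expectation.} By \eqref{eq:exploration}, $\jiao{\theta_t,a_t}=s$ is deterministic given $\calF_{t-1}$, and $(I-\theta_t\theta_t^\top)a_t=\sigma_t Z_t$. Hence \eqref{eq:SGD-decision-making} gives
\begin{align*}
m_{t+1/2}-m = -\eta_t\sigma_t f'(s)\,(f(s)-r_t)\,\jiao{\theta^\star,Z_t}.
\end{align*}
Next decompose $\theta^\star = m\theta_t+\sqrt{1-m^2}\,u$ with $u\perp\theta_t$ a unit vector. Then $\jiao{\theta^\star,Z_t}=\sqrt{1-m^2}\,X$ with $X:=\jiao{u,Z_t}$. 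Since $Z_t$ is uniform on the unit sphere of $\theta_t^\perp\cong\bS^{d-2}$, $X$ has the one-dimensional marginal density proportional to $(1-x^2)^{(d-4)/2}$ on $[-1,1]$. Moreover $\jiao{\theta^\star,a_t}=sm+\sigma_t\sqrt{1-m^2}X$.

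Taking the expectation over the zero-mean noise $\varepsilon_t$ and using $\bE[X]=0$ to kill the $f(s)X$ term, I get
\begin{align*}
\bE[m_{t+1/2}|\calF_t]-m=\eta_t\sigma_t f'(s)\sqrt{1-m^2}\;\bE\big[g(X)X\big],\qquad g(x):=f\big(sm+\sigma_t\sqrt{1-m^2}\,x\big).
\end{align*}

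\textbf{Stein identity.} Since $\frac{\rmd}{\rmd x}(1-x^2)^{(d-2)/2}=-(d-2)\,x\,(1-x^2)^{(d-4)/2}$, integration by parts gives
\begin{align*}
\bE[g(X)X]=\frac{1}{d-2}\,\bE\big[g'(X)(1-X^2)\big].
\end{align*}
The boundary terms vanish because $(1-x^2)^{(d-2)/2}=0$ at $\pm1$ when $d\ge3$. Substituting $g'(x)=\sigma_t\sqrt{1-m^2}\,f'\big(sm+\sigma_t\sqrt{1-m^2}\,x\big)$ yields exactly the claimed identity, with the factor $\sigma_t^2(1-m^2)/(d-2)$. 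This integration-by-parts step is the only genuinely non-routine point; I would check integrability carefully in the edge case $d=3$, where the density is $(1-x^2)^{-1/2}$.

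\textbf{Lower bounds.} Monotonicity in \Cref{assump:f} gives $f'\ge0$, so the integrand is nonnegative and I may restrict to the event $\{0\le X\le 1/2\}$. On this event, since $m>0$, the argument satisfies $0\le sm\le sm+\sigma_t\sqrt{1-m^2}X\le1$; the upper bound follows from Cauchy--Schwarz, since $s^2+\sigma_t^2=1$.
\begin{itemize}
\item Under \Cref{assump:GLB}, $f'\ge c_0$ on this range.
\item Under \Cref{assump:convex}, $f'$ is nondecreasing on $[0,1]$, so $f'(\text{argument})\ge f'(sm)$.
\end{itemize}
It then remains to show $\bE[(1-X^2)\mathbbm{1}\{0\le X\le1/2\}]\ge\frac34\,\bP(0\le X\le 1/2)$ is bounded below by a universal constant uniformly in $d\ge3$. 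For this I would use symmetry together with $\bE X^2=1/(d-1)$: for large $d$, Chebyshev's inequality gives $\bP(|X|\le1/2)\ge 1-4/(d-1)$, and the finitely many small $d$ are checked directly from the explicit density. Finally, absorbing $d/(d-2)\le3$ and $c_0$ into $\Cd$ gives both stated lower bounds.
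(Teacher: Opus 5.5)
Your proposal is correct and follows the paper's route: the same reduction to a one-dimensional expectation via $\jiao{\theta^\star, Z_t} \overset{d}{=} \sqrt{1-m_t^2}\,X$, the same spherical Stein identity on $\mathbb{S}^{d-2}$ giving the $1/(d-2)$ factor, and the same lower bounds obtained by restricting to $X\ge 0$ and using $f'\ge c_0$ or the monotonicity of $f'$ under convexity. The only small difference is that the paper skips your Chebyshev-plus-finite-check step: symmetry and $\mathbb{E}X^2=\frac{1}{d-1}$ give directly $\mathbb{E}[(1-X^2)\mathbbm{1}(X\ge 0)]=\frac{d-2}{2(d-1)}\ge\frac14$ for all $d\ge 3$.
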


The next result concerns the subexponential concentration property of the martingale difference. 

\begin{lemma}[Martingale difference] \label{lemma:subexp-improvements}
The $\Psi_1$-Orlicz norm (i.e., the subexponential norm) of the martingale difference has the following upper bound, conditioned on $\calF_t$: 
\begin{align*}
    \| m_{t+1/2} - \mathbb{E} [ m_{t+1/2} | \mathcal{F}_t] \|_{\Psi_1} \le K_t \triangleq \Cse \sqrt{\frac{1-m_t^2}{d}} \eta_t \sigma_t \fsig ,
\end{align*}
where $\Cse>0$ is a universal constant. 
\end{lemma}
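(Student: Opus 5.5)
The plan is to compute the one-step increment $m_{t+1/2}-m_t$ explicitly, observe that, conditionally on $\calF_t$, it equals a deterministic prefactor times the product of a spherical coordinate and a bounded-plus-subGaussian factor, and then read off the $\Psi_1$ norm from standard Orlicz-norm facts.

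\emph{Step 1: reduce to a product of simple random variables.} By \eqref{eq:exploration}, $\jiao{\theta_t,a_t}=\sqrt{1-\sigma_t^2}$ is deterministic given $\calF_t$. Hence the factor $f'(\jiao{\theta_t,a_t})$ in \eqref{eq:SGD-decision-making} equals the constant $\fsig$, and $f(\jiao{\theta_t,a_t})=f(\sqrt{1-\sigma_t^2})$. Moreover $(I-\theta_t\theta_t^\top)a_t=\sigma_t Z_t$. Decompose $\theta^\star=m_t\theta_t+\sqrt{1-m_t^2}\,u_t$ with $u_t\perp\theta_t$ a unit vector (any choice works if $m_t=\pm1$). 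Then $\jiao{\theta^\star,\sigma_tZ_t}=\sigma_t\sqrt{1-m_t^2}\,X_t$ with $X_t:=\jiao{u_t,Z_t}$. Since $Z_t$ is uniform on the unit sphere of the $(d-1)$-dimensional subspace $\theta_t^\perp$, $X_t$ is a one-dimensional marginal of $\Unif(\bS^{d-2})$. Therefore
\begin{align*}
m_{t+1/2}-m_t=\eta_t\sigma_t\sqrt{1-m_t^2}\,\fsig\cdot X_t\,(B_t+\varepsilon_t),
\end{align*}
where $B_t:=f(\jiao{\theta^\star,a_t})-f(\sqrt{1-\sigma_t^2})$ satisfies $|B_t|\le 2$ by $\|f\|_\infty\le1$ in \Cref{assump:f}. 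This identity also underlies \Cref{lemma:drift}.

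\emph{Step 2: bound the $\Psi_1$ norm of $X_t(B_t+\varepsilon_t)$.} Conditionally on $\calF_t$, I will use three standard facts.
\begin{enumerate}
\item A coordinate of the uniform distribution on $\bS^{d-2}$ is subGaussian with $\|X_t\|_{\Psi_2}\le C/\sqrt{d-1}\le C'/\sqrt d$. This follows from concentration of measure on the sphere; it is the only place dimension enters, and $d\ge3$ suffices.
\item Since $|X_tB_t|\le 2|X_t|$, we get $\|X_tB_t\|_{\Psi_1}\lesssim\|X_tB_t\|_{\Psi_2}\le 2\|X_t\|_{\Psi_2}$. Nothing about the dependence between $B_t$ and $X_t$ is needed.
\item The noise $\varepsilon_t$ is $1$-subGaussian, and this holds conditionally on $a_t$, hence on $X_t$. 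So $\|X_t\varepsilon_t\|_{\Psi_1}\lesssim\|X_t\|_{\Psi_2}\|\varepsilon_t\|_{\Psi_2}\lesssim 1/\sqrt d$, by the product rule $\|UV\|_{\Psi_1}\le\|U\|_{\Psi_2}\|V\|_{\Psi_2}$.
\end{enumerate}
The triangle inequality then gives $\|X_t(B_t+\varepsilon_t)\|_{\Psi_1}\lesssim 1/\sqrt d$.

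\emph{Step 3: centering.} Recentering costs only a constant: $\|Y-\bE[Y|\calF_t]\|_{\Psi_1}\le C\|Y\|_{\Psi_1}$. Also $m_{t+1/2}-\bE[m_{t+1/2}|\calF_t]$ is exactly the centered version of the increment in Step 1, because $m_t$ is $\calF_t$-measurable. Multiplying by the deterministic prefactor $\eta_t\sigma_t\sqrt{1-m_t^2}\,\fsig$ yields the claimed $K_t$ with a universal $\Cse$.

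I expect the only mildly delicate point to be the product of the spherical coordinate with the noise. Here I will use that $\varepsilon_t$ is conditionally subGaussian given the action, so the product rule applies conditionally. Everything else is bookkeeping with universal constants.
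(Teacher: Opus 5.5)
Your proof is correct and follows essentially the same route as the paper: split the increment into the bounded part (the paper's $\xi^{(1)}$, handled via $\|\cdot\|_{\Psi_1}\lesssim\|\cdot\|_{\Psi_2}$) and the noise part ($\xi^{(2)}$, handled via the product rule), then apply the triangle inequality. The differences are minor. The paper bounds the spherical coordinate's $\Psi_2$ norm by a Gaussian convex-order comparison (\Cref{lemma:1d-conv-dom}) rather than by concentration on the sphere, and it leaves the recentering step implicit where you make it explicit. Also, $\|UV\|_{\Psi_1}\le\|U\|_{\Psi_2}\|V\|_{\Psi_2}$ holds for arbitrarily dependent $U,V$, so the noise term is not actually delicate: you only need $\|\varepsilon_t\|_{\Psi_2}\lesssim 1$ conditionally on $\calF_t$.
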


Based on \Cref{lemma:subexp-improvements}, we proceed to consider the (discounted) sum of martingale differences. For $t_0\ge 0$ and $\beta > 0$, let
\begin{align*}
S_t^{t_0, \beta} := \sum_{s=t_0}^{t-1} \beta^{s-t_0}\pth{m_{s+1/2} - \mathbb{E} [ m_{s+1/2} | \mathcal{F}_s]}
\end{align*}
be a martingale adapted to $\{\calF_t\}_{t\ge t_0}$, and $V_t^{t_0,\beta} := \sum_{s=t_0}^{t-1} \beta^{2(s-t_0)}K_s^2$ be a proxy for its predictable quadratic variation. The following result is a self-normalized concentration inequality for such processes established in \cite[Theorem 3.1]{whitehouse2023time}: 
\begin{lemma}[Sum of martingale differences] \label{lemma:self-norm-mart-conc-applied}
Let $\eta_t \le (\Cse \gamma_2)^{-1} $ and $\sigma_t^2 \le \gamma_0$ for all $t\ge 1$, and $\beta\ge 0$. For $\delta>0$, it holds that
\begin{align*}
    \bP \left( \exists t \ge t_0: |S_t^{t_0, \beta}| \ge \Cs \sqrt{ V_t^{t_0,\beta} \vee 1 } \log \left(\frac{1 + \log (V_t^{t_0,\beta} \vee 1 )}{\delta}\right) \text{ and } \beta^{t-t_0}\le 2 \ \middle| \ \mathcal{F}_{t_0} \right) \le \delta,
\end{align*}
for some universal constant $\Cs > 0$. 
\end{lemma}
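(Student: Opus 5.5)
The plan is to reduce the statement to the self-normalized inequality of \cite[Theorem 3.1]{whitehouse2023time}, applied to the rescaled increments. Write $D_s := m_{s+1/2} - \bE[m_{s+1/2}\mid\calF_s]$ and $X_s := \beta^{s-t_0} D_s$, so that $S_t^{t_0,\beta} = \sum_{s=t_0}^{t-1} X_s$ is a martingale with respect to $\{\calF_s\}_{s\ge t_0}$. By \Cref{lemma:subexp-improvements}, conditionally on $\calF_s$ we have $\|X_s\|_{\Psi_1} \le \beta^{s-t_0}K_s$. That theorem controls sums whose increments satisfy a sub-exponential (sub-gamma/Bernstein-type) conditional moment generating function bound. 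Its conclusion is a time-uniform bound of the form
\[
|S_t| \lesssim \sqrt{(V_t\vee 1)\log\bigl((1+\log(V_t\vee 1))/\delta\bigr)} + c\log\bigl((1+\log(V_t\vee1))/\delta\bigr),
\]
where $c$ is a uniform bound on the scale parameters of the increments.

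The first step is to convert the $\Psi_1$ bound into a conditional MGF bound. The standard fact is that a centered variable $Y$ with $\|Y\|_{\Psi_1}\le K$ satisfies $\bE[e^{\lambda Y}]\le e^{C\lambda^2K^2}$ for $|\lambda|\le 1/(CK)$. Applied to $X_s$, this gives a sub-gamma bound with variance proxy $C\beta^{2(s-t_0)}K_s^2$ and scale $C\beta^{s-t_0}K_s$. This is exactly the proxy $V_t^{t_0,\beta}$ in the statement, up to the constant absorbed into $\Cs$.

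The second step, which I expect to be the main obstacle, is to make the scale parameter uniformly $O(1)$. Only then does the linear term merge with the $\sqrt{V\vee1}$ term, giving the single-product form $\Cs\sqrt{V\vee1}\,\log(\cdot)$. Here we use the hypotheses:
\begin{itemize}
\item $\sigma_t^2\le\gamma_0$ gives $\sqrt{1-\sigma_t^2}\ge 1-\gamma_0$, so \Cref{assump:f} yields $\fsig\le\gamma_2$;
\item combined with $\eta_t\le(\Cse\gamma_2)^{-1}$, $\sigma_t\le1$ and $1-m_t^2\le 1$, this gives $K_s\le 1/\sqrt d\le 1$.
\end{itemize}
The factor $\beta^{s-t_0}$ is handled through the event $\beta^{t-t_0}\le 2$. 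If $\beta\le1$, every $\beta^{s-t_0}\le 1$. If $\beta>1$, then $\beta^{s-t_0}\le\beta^{t-t_0}\le 2$ for all $s<t$ on that event. Strictly, the MGF condition must hold for every increment, not just those before the stopping horizon. To fix this, replace $X_s$ by $X_s\1\{\beta^{s+1-t_0}\le 2\}$. The indicator is deterministic, so the modified process is still a martingale with scales bounded by $2$, and it agrees with $S_t$ whenever $\beta^{t-t_0}\le2$. The same truncation applies to $V_t$.

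The final step is to apply the theorem conditionally on $\calF_{t_0}$ and combine terms. Since $\log((1+\log(V\vee1))/\delta)\ge\log(1/\delta)$, which we may assume is at least $1$ (for $\delta<1/e$; otherwise the claim is trivial after adjusting the constant), we have $\sqrt{\log(\cdot)}\le\log(\cdot)$. The constant linear term is then at most $\sqrt{V\vee1}\log(\cdot)$. Collecting universal constants into $\Cs$ gives the stated bound with probability at least $1-\delta$.
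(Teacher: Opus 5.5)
Your proposal is correct and follows the paper's proof. Both arguments convert the $\Psi_1$ bound of \Cref{lemma:subexp-improvements} into a conditional MGF bound, use $\sigma_t^2\le\gamma_0$ and $\eta_t\le(\Cse\gamma_2)^{-1}$ to get $K_t\le 1$, build the sub-$\psi$ supermartingale with $\psi(\lambda)=\lambda^2/(1-\lambda/\lambda_{\max})$, and invoke \cite[Theorem 3.1]{whitehouse2023time} with $\omega=1$ on each tail; your truncation handling the $\beta^{t-t_0}\le 2$ restriction and your explicit merging of the linear term are more careful than the paper's version, though your aside that $\delta\ge 1/e$ is ``trivial after adjusting the constant'' is wrong (at $V\le 1$ the threshold is $\Cs\log(1/\delta)\to 0$ as $\delta\to 1$, so the single-product form genuinely needs $\log(1/\delta)\gtrsim 1$), which is harmless since the lemma is only ever used with small $\delta$.
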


Note that when $\beta\in [0,1]$, the condition $\beta^{t-t_0}\le 2$ is vacuous. For $\beta>1$, this condition results in a smaller range of $t\in[t_0, t_0+\log_\beta 2]$. Finally, we bound the normalization error $m_{t+1} - m_{t+1/2}$. 

\begin{lemma}[Normalization error] \label{lemma:norm-error-ub}
With probability at least $1-\delta$, it holds that
\begin{align*}
m_{t+1} \ge m_{t+1/2} - \Cn \cdot \eta_t^2 \sigma_t^2 \big( \fsig \big)^2 \log (1/\delta), 
\end{align*}
for some universal constant $\Cn>0$. In addition, if $m_{t+1/2}\ge 0$, then with probability $1-\delta$,
\begin{align*}
m_{t+1/2}\ge m_{t+1} \ge m_{t+1/2} \pth{1-\Cn \cdot \eta_t^2 \sigma_t^2 \big( \fsig \big)^2 \log (1/\delta)}. 
\end{align*}
\end{lemma}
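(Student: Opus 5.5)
Write $u := \theta_{t+1/2} - \theta_t = -\eta_t g_t (I-\theta_t\theta_t^\top)a_t$, where $g_t = (f(\jiao{\theta_t,a_t}) - r_t)\, f'(\jiao{\theta_t,a_t})$. The plan is to compute $\|\theta_{t+1/2}\|$ exactly and then bound the one random scalar that appears in it.

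\emph{Exact form of the normalization.} Since $u \perp \theta_t$ and $\|\theta_t\|=1$, we have
\begin{align*}
\|\theta_{t+1/2}\|^2 = 1 + \|u\|^2,
\end{align*}
and therefore $m_{t+1} = m_{t+1/2}/\sqrt{1+\|u\|^2}$.

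\emph{Simplifying the factors.} First, $\jiao{\theta_t, a_t} = \sqrt{1-\sigma_t^2}$ deterministically, because $Z_t \perp \theta_t$. So $f'(\jiao{\theta_t,a_t}) = \fsig$ is a fixed number. Second, $(I-\theta_t\theta_t^\top)a_t = \sigma_t Z_t$, which has norm exactly $\sigma_t$. Hence
\begin{align*}
\|u\| = \eta_t\sigma_t \fsig\, \bigl|f(\sqrt{1-\sigma_t^2}) - f(\jiao{\theta^\star,a_t}) - \varepsilon_t\bigr|.
\end{align*}
Using $\|f\|_\infty\le 1$ and the $1$-subGaussian tail of $\varepsilon_t$, the absolute value is at most $2 + |\varepsilon_t| \lesssim \sqrt{\log(1/\delta)}$ with probability $1-\delta$. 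Since $\delta \le 1/2$ can be assumed, $\log(1/\delta)$ is bounded below by a constant. This gives
\begin{align*}
\|u\|^2 \le C\,\eta_t^2\sigma_t^2 \bigl(\fsig\bigr)^2 \log(1/\delta)
\end{align*}
with probability at least $1-\delta$.

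\emph{Concluding both claims.} The key elementary inequality is $1/\sqrt{1+x} \ge 1 - x/2$ for $x \ge 0$.

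If $m_{t+1/2}\ge 0$, then $m_{t+1} \le m_{t+1/2}$ because $\sqrt{1+\|u\|^2}\ge 1$. Also $m_{t+1} \ge m_{t+1/2}(1-\|u\|^2/2)$, which is the second claim with $\Cn = C/2$.

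For the first claim, consider the two signs of $m_{t+1/2}$.
\begin{itemize}
\item If $m_{t+1/2}\ge 0$, the bound above together with $m_{t+1/2} \le \|\theta_{t+1/2}\|$ gives $m_{t+1} \ge m_{t+1/2} - m_{t+1/2}\|u\|^2/2$. Since $m_{t+1/2} \le \sqrt{1+\|u\|^2}$, the subtracted term is $O(\|u\|^2)$ whenever $\|u\|\le 1$.
\item If $m_{t+1/2} < 0$, dividing by a number at least $1$ only increases $m_{t+1/2}$, so $m_{t+1}\ge m_{t+1/2}$ trivially.
\end{itemize}

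\emph{Main obstacle.} The only delicate case is $\|u\|>1$. Here I would argue directly. Note $m_{t+1}\ge -1$, and $m_{t+1/2} \le \sqrt{1+\|u\|^2} \le 2\|u\|$. So the claimed lower bound $m_{t+1/2} - \Cn\|u\|^2$ lies below $2\|u\| - \Cn\|u\|^2 \le -1$ once $\Cn$ is a large enough constant. In that case the inequality holds automatically. Enlarging $\Cn$ to cover this regime completes the proof.
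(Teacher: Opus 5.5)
Your proposal is correct and follows the paper's proof: the exact identity $\|\theta_{t+1/2}\|^2 = 1+\|u\|^2$ (from $u\perp\theta_t$ and $\|(I-\theta_t\theta_t^\top)a_t\| = \sigma_t$), a sub-Gaussian bound on the residual $f(\sqrt{1-\sigma_t^2}) - r_t$, and then an elementary inequality. The only difference is cosmetic: for the first claim the paper writes $m_{t+1} = m_{t+1/2} - m_{t+1}\,(\|\theta_{t+1/2}\|-1)$ and uses $|m_{t+1}|\le 1$ together with $\sqrt{1+x}-1\le x/2$, which gives $m_{t+1}\ge m_{t+1/2}-\|u\|^2/2$ in one line and makes your case split on the sign of $m_{t+1/2}$ and on $\|u\|>1$ unnecessary.
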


\section{Analysis of the learning phase}\label{sec:learning}
In this section we analyze the SGD dynamics in the learning phase, given a ``warm start'' $\theta_1$ with $m_1 = \jiao{\theta^\star,\theta_1} \ge 1-\gamma_0/4$. 

\subsection{Pure exploration}
The crux of the proof of \Cref{thm:learning} lies in the following lemma, which shows that starting from a correlation $m_t\ge 1-\varepsilon$, SGD will improve it to $1-\varepsilon/2$ after $\widetilde{O}(d^2/\varepsilon)$ steps. 

\begin{lemma}[Local improvement for pure exploration] \label{lemma:local-improvement-learning-PE}
Suppose $m_t\ge 1-\varepsilon$ for some $\varepsilon \le \gamma_0/4$. Let $\iota := \log^2 ( d / \varepsilon \delta)$, and for $s\ge t$, set
\begin{align*}
    \eta_s \equiv \eta := \frac{c \varepsilon}{d \iota}, \quad \sigma_s^2 \equiv \sigma^2 := \gamma_0,
\end{align*}
where $c>0$ is a small absolute constant. Then for $\Delta := C d/\eta$ and a large absolute constant $C>0$ independent of $c$, we have $m_{t+\Delta} \ge 1-\varepsilon/2$ with probability at least $1 - \Delta \delta$. 
\end{lemma}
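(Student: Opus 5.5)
Track the gap $u_s := 1-m_s$ rather than $m_s$. The goal is a linear contraction of $u_s$ in expectation, and a discounted martingale argument turns it into a high-probability bound. The argument runs on the event that $m_s \ge 1-2\varepsilon$ for all $s\in[t,t+\Delta]$, which stays inside the locally linear region of \Cref{assump:f}.

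\textbf{Step 1: one-step drift.} On this event, $\sqrt{1-\sigma^2}\,m_s + \sigma\sqrt{1-m_s^2}X$ lies in $[1-\gamma_0,1]$ whenever $|X|$ is not too large. Here $\sqrt{1-\gamma_0}\ge 1-\gamma_0/2-O(\gamma_0^2)$ and $\sigma\sqrt{1-m_s^2}\le \sqrt{\gamma_0\cdot 4\varepsilon}\le \gamma_0$; a little care with constants is needed. So $f'$ evaluated both at $\sqrt{1-\sigma^2}$ and at the point inside the expectation of \Cref{lemma:drift} lies in $[\gamma_1,\gamma_2]$. The exact identity of \Cref{lemma:drift} then gives
\[
\bE[m_{s+1/2}\mid\calF_s]-m_s \ge c_1\frac{\eta\gamma_0}{d}\gamma_1^2(1-m_s^2)\ge c_1'\frac{\eta}{d}u_s,
\]
using $\bE[1-X^2]\gtrsim 1$. \Cref{lemma:norm-error-ub} (second part, since $m_{s+1/2}\ge0$) bounds the normalization loss by $O(\eta^2\gamma_0\gamma_2^2\log(1/\delta))$. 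This loss is an additive error of size $O(\eta^2\log(1/\delta))$, not proportional to $u_s$. Combining,
\[
u_{s+1}\le (1-\kappa)u_s + O(\eta^2\log(1/\delta)) - \xi_s,\qquad \kappa:=c_1'\eta/d,
\]
where $\xi_s := m_{s+1/2}-\bE[m_{s+1/2}\mid\calF_s]$ is the martingale difference. By \Cref{lemma:subexp-improvements}, $\xi_s$ has $\Psi_1$-norm at most $K_s\lesssim \eta\sqrt{u_s/d}\lesssim\eta\sqrt{\varepsilon/d}$.

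\textbf{Step 2: unroll.} Unrolling gives
\[
u_{t+k}\le (1-\kappa)^k u_t + O\!\Big(\frac{\eta^2\log(1/\delta)}{\kappa}\Big) + (1-\kappa)^{k-1}\big|S^{t,\beta}_{t+k}\big|,
\qquad \beta=(1-\kappa)^{-1}>1.
\]
The three terms are handled as follows.
\begin{itemize}
\item The first term is at most $e^{-\kappa\Delta}\varepsilon\le\varepsilon/8$, since $\kappa\Delta=c_1'C$ and $C$ is large.
\item The second term is $O(d\eta\log(1/\delta))=O(c\varepsilon)\le\varepsilon/8$ for small $c$.
\item For the third, \Cref{lemma:self-norm-mart-conc-applied} with $\beta>1$ applies only while $\beta^{k}\le 2$, i.e. for windows of length about $\log 2/\kappa$. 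So split $[t,t+\Delta]$ into $O(C)$ consecutive blocks of this length and restart the discounted martingale at the start of each block, conditioning on $\calF$ at the block's start. Within a block, $V\lesssim \frac{1}{\kappa}\cdot\frac{\eta^2\varepsilon}{d}\asymp \eta\varepsilon$. The fluctuation is then $O(\sqrt{\eta\varepsilon}\,\log)$, and since $\eta=c\varepsilon/(d\iota)$ with $\iota=\log^2$, this is $O(\sqrt{c}\,\varepsilon/\sqrt d)\le \varepsilon/8$.
\end{itemize}
Chaining the blocks, each block contracts the deterministic part by a constant factor and adds at most $\varepsilon/8$ of noise. After $O(C)$ blocks this gives $u_{t+\Delta}\le\varepsilon/2$.

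\textbf{Step 3: main obstacle.} The hardest part is justifying the standing event $m_s\ge 1-2\varepsilon$ throughout, which is needed both for the drift bound and for the $K_s$ bound. This is circular, so I will handle it by a stopping-time argument. Define $\tau$ as the first time $u_s>2\varepsilon$. Run the recursion above with $K_s$ replaced by $K_s\mathbf 1\{s<\tau\}$. The same estimates show that $u_s\le\varepsilon+3\varepsilon/8<2\varepsilon$ up to $\tau$, so $\tau>t+\Delta$ on the good event. Also note that the lower bound of \Cref{lemma:self-norm-mart-conc-applied} needs $V\vee1$; since $V\ll1$ here, the relevant fluctuation is really $O(\log)$ times the $\Psi_1$ scale. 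I would therefore apply the result after rescaling the increments by $(\eta\sqrt{\varepsilon/d})^{-1}$, which is a uniform bound on $K_s$.

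Finally, union-bound the normalization events (each with failure probability $\delta$, over $\Delta$ steps) and the $O(C)$ martingale events. The total failure probability is at most $\Delta\delta$ after adjusting constants.
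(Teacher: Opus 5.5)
Your argument is correct, but it is organized differently from the paper's.

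The paper never uses a contraction of the gap. Its argument has three stages:
\begin{itemize}
\item It first proves a ``no-degradation'' bound $m_s\ge m_t-\varepsilon/4$ on $[t,t+\Delta]$. This uses only nonnegativity of the drift, an undiscounted ($\beta=1$) martingale bound, and the crude total $O(\Delta\eta^2\log)$ of normalization errors.
\item It then takes the hitting time $T_0$ of level $1-\varepsilon/4$. If $T_0>t+\Delta$, the drift, which is of order $\eta\varepsilon/d$ per step because $1-m_s^2\ge\varepsilon/4$ there, would sum to $\Omega(C\varepsilon)$, a contradiction.
\item Finally it reruns no-degradation starting from $T_0$.
\end{itemize}

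You instead use that the drift is proportional to the gap, $\ge\kappa u_s$, and unroll a discounted recursion with $\beta=(1-\kappa)^{-1}>1$. That choice is what forces your block decomposition to respect $\beta^k\le 2$. You also use a single global stopping time to enforce the standing event.

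What each route buys:
\begin{itemize}
\item The paper's route is lighter: no discounting, no blocks, and only an order-$\eta\varepsilon/d$ drift bound below the threshold. However, its noise terms scale with the window length, so $c$ must be small relative to $C$. It also restarts the concentration argument at the random time $T_0$.
\item Your route costs more bookkeeping but gives a self-correcting bound. The per-block noise does not depend on $C$, and the gap contracts geometrically, block by block, down to a noise floor of order $d\eta\log+\sqrt{\eta\varepsilon}\log$.
\end{itemize}

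Your caveat about $\sqrt{V\vee1}$ in \Cref{lemma:self-norm-mart-conc-applied} is well taken. Here $V\ll 1$, so the lemma as literally stated only gives an $O(\log(1/\delta))$ bound. The paper's estimate $O(\eta\sqrt{\Delta\varepsilon/d}\log)$ therefore implicitly needs exactly your rescaling by a uniform bound on $K_s$.

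One small cleanup in Step 1: restrict to $X\ge 0$, as in the proof of \Cref{lemma:drift}, rather than to small $|X|$. Since $\sqrt{1-\gamma_0}\,(1-\gamma_0/2)\ge 1-\gamma_0$ exactly, the argument of $f'$ then lies in $[1-\gamma_0,1]$. Monotonicity ($f'\ge 0$) lets you drop the event $X<0$.
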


We call the time interval $[t,t+\Delta]$ an ``epoch'', and choose the learning rate based on the epoch. \Cref{lemma:local-improvement-learning-PE} shows that, as long as the correlation is large at the beginning of an epoch, then it must be improved in a linear rate at the end of the epoch. Therefore, by induction and a geometric series calculation, it is clear that the learning rate schedule given by \Cref{lemma:local-improvement-learning-PE} corresponds to $\eta_t = \widetilde{\Theta}(\frac{d}{t}\wedge \frac{1}{d})$, and \Cref{lemma:local-improvement-learning-PE} gives an overall sample complexity $\widetilde{O}(\frac{d^2}{\varepsilon})$ for pure exploration. 

In the sequel we prove \Cref{lemma:local-improvement-learning-PE}. We first show that by induction on $s$ that with probability at least $1-\Delta \delta/3$, $m_s\ge 1-2\varepsilon$ for all $s\in [t,t+\Delta]$. The base case $s=t$ is ensured by the assumption $m_t\ge 1-\varepsilon$. For the inductive step, suppose $m_t, \dots, m_{s-1}\ge 1-2\varepsilon$. Then
\begin{align*}
m_s - m_t = \sum_{r=t}^{s-1} \bqth{\underbrace{\pth{ \bE[m_{r+1/2}|\calF_r] - m_r }}_{\ge 0 \text{ by \Cref{lemma:drift}}} + \underbrace{\pth{m_{r+1/2} - \bE[m_{r+1/2}|\calF_r]}}_{=: A_r} + \underbrace{\pth{m_{r+1} - m_{r+1/2}}}_{=:B_r} }.
\end{align*}
Thanks to the inductive hypothesis, $K_r = O(\eta\sqrt{\frac{\varepsilon}{d}})$ for all $r\in [t,s-1]$ in \Cref{lemma:subexp-improvements}, so \Cref{lemma:self-norm-mart-conc-applied} (with $t_0=t, \beta=1$) gives,
\begin{equation*}
    \left| \sum_{r=t}^{s-1} A_r \right| = O \left( \eta\sqrt{\frac{\Delta \varepsilon}{d}}\log \left(\frac{\Delta}{\delta} \right) \right) = O \left( \sqrt{\eta\varepsilon}\log \left( \frac{\Delta}{\delta} \right) \right) < \frac{\varepsilon}{8}
\end{equation*}
with probability $1-\frac{\delta}{6}$, by choosing $c>0$ small enough. Similarly,
\begin{equation*}
    \sum_{r=t}^{s-1} |B_r| = O \left( \Delta \eta^2\log \left( \frac{\Delta}{\delta} \right) \right) = O \left(d\eta \log \left(\frac{\Delta}{\delta} \right) \right) < \frac{\varepsilon}{8}
\end{equation*}
with probability $1-\frac{\delta}{6}$, by \Cref{lemma:norm-error-ub} and choosing $c>0$ small enough. This implies that $m_s\ge m_t - \frac{\varepsilon}{4} > 1-2\varepsilon$ with probability $1-\frac{\delta}{3}$, completing the induction. 

Conditioned on the event $m_s\ge 1-2\varepsilon$ for all $s\in [t,t+\Delta]$, we distinguish into two regimes in this epoch. Let $T_0\ge t$ be the stopping time when $m_s > 1-\varepsilon/4$ for the first time. 

\paragraph{Regime I: $t\le s< T_0$.} In this regime $m_s\in [1-2\varepsilon, 1-\varepsilon/4]$. We show that $T_0\le t+ \Delta$ with probability $1-\Delta \delta/3$. If $T_0>t+\Delta$, using the same high-probability bounds, we have
\begin{align*}
m_{t+\Delta} - m_t \ge \sum_{s=t}^{t+\Delta-1} \pth{\bE[m_{s+1/2}|\calF_s] - m_s} - \frac{\varepsilon}{4}
\end{align*}
with probability $1-\Delta \delta/3$. By \Cref{lemma:drift} with $1-m_s^2=\Omega(\varepsilon)$ and $\sqrt{1-\sigma_s^2}m_s \ge 1-\gamma_0$ for $s<T_0$, the total drift is $\Omega(\frac{\Delta \eta \varepsilon}{d}) = \Omega(C\varepsilon)$. Therefore, for a large absolute constant $C>0$, we would have $m_{t+\Delta}\ge 1-\varepsilon/4$, a contradiction to the assumption $T_0> t + \Delta$. 
 
\paragraph{Regime II: $s\ge T_0$.} As shown above, this regime is non-empty with high probability. The same induction starting from $s=T_0$ shows that, with probability $1-\Delta \delta/3$, $m_s\ge m_{T_0}-\varepsilon/4$ holds for all $s\in [T_0, t+\Delta]$. In particular, choosing $s=t+\Delta$ gives the desired result $m_{t+\Delta}\ge 1-\varepsilon/2$. 

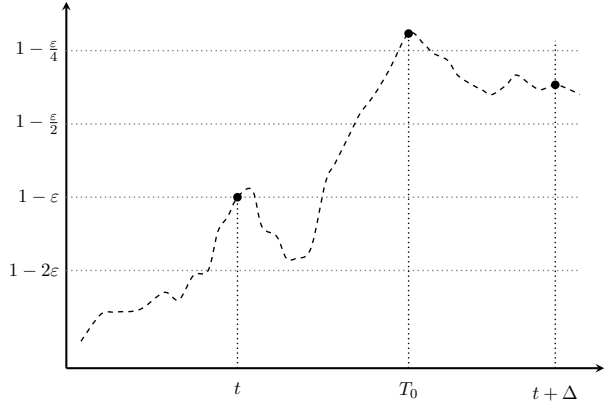
\begin{figure}[h]
\centering
\scalebox{0.65}{
\begin{tikzpicture}[
    >=stealth,
    thick
]

\draw[->, very thick] (0,0) -- (11,0) node[right] {};
\draw[->, very thick] (0,0) -- (0,7.5);

\draw[dotted,color=gray] (0,3.5) -- (10.5,3.5);
\node[left] at (0,3.5) {$1 - \varepsilon$};

\draw[dotted,color=gray] (0,5) -- (10.5,5);
\node[left] at (0,5) {$1 - \frac{\varepsilon}{2}$};

\draw[dotted,color=gray] (0,6.5) -- (10.5,6.5);
\node[left] at (0,6.5) {$1 - \frac{\varepsilon}{4}$};

\draw[dotted,color=gray] (0,2) -- (10.5,2);
\node[left] at (0,2) {$1 - 2\varepsilon$};

\draw[dotted] (3.5,0) -- (3.5,3.5);
\node[below] at (3.5,-0.15) {$t$};

\draw[dotted] (7,0) -- (7,6.85);
\node[below] at (7,-0.15) {$T_0$};

\draw[dotted] (10,0) -- (10,6.7);
\node[below=2pt] at (10,-0.15) {$t + \Delta$};

\draw[thick, dashed, smooth]
    plot[tension=0.6] coordinates {
        (0.3, 0.55)
        (0.7, 1.1)
        (1.0, 1.15)
        (1.5, 1.2)
        (2.0, 1.55)
        (2.3, 1.4)
        (2.6, 1.9)
        (2.9, 2.0)
        (3.1, 2.8)
        (3.3, 3.1)
        (3.5, 3.5)          
        (3.8, 3.65)
        (4.0, 2.9)
        (4.3, 2.7)
        (4.5, 2.25)
        (4.7, 2.25)
        (5.0, 2.45)
        (5.3, 3.8)
        (5.5, 4.2)
        (5.8, 4.8)
        (6.0, 5.2)
        (6.3, 5.6)
        (6.6, 6.1)
        (7.0, 6.85)          
        (7.3, 6.65)
        (7.5, 6.45)
        (7.8, 6.3)
        (8.0, 6.0)
        (8.3, 5.8)
        (8.5, 5.7)
        (8.7, 5.6)
        (9.0, 5.8)
        (9.2, 6.0)
        (9.5, 5.8)
        (9.7, 5.7)
        (10.0, 5.8)         
        (10.3, 5.7)
        (10.5, 5.6)
    };

\fill (3.5, 3.5) circle (2.5pt);
\fill (7.0, 6.85) circle (2.5pt);
\fill (10.0, 5.8) circle (2.5pt);

\end{tikzpicture}}
\caption{\textit{An example behavior of SGD for pure exploration in the learning phase (cf. \Cref{lemma:local-improvement-learning-PE}).} For appropriately chosen learning rates, if the correlation hits $m_t \ge 1-\varepsilon$ at time $t$, the SGD dynamics will enjoy the following behaviors with high probability: $(i)$ the trajectory will never degrade too significantly, satisfying $m_s \ge 1 - 2 \varepsilon$ for all $t \le s \le t+\Delta$; $(ii)$ at some time $s=T_0\in [t,t+\Delta]$, $m_s$ improves to at least $1-\frac{\varepsilon}{4}$; and $(iii)$ thereafter, $m_s$ may decrease, but will never fall below $1 - \frac{\varepsilon}{2}$ for all $T_0 \le s \le t + \Delta$. }\label{fig:lemma3p1}
\end{figure}

An illustration of our proof technique is displayed in \Cref{fig:lemma3p1}. 

\subsection{Regret minimization}
The proof of \Cref{thm:learning} for regret minimization follows similarly from the following lemma.

\begin{lemma}[Local improvement for regret minimization] \label{lemma:local-improvement-learning-RM}
Suppose $m_t\ge 1-\varepsilon$ for some $\varepsilon \le \gamma_0/4$. Let $\iota := \log^2 ( d / \varepsilon \delta)$, and for $s\ge t$, set
\begin{align*}
    \eta_s \equiv \eta := \frac{c \varepsilon}{d \iota}, \quad \sigma_s^2 \equiv \sigma^2 := \varepsilon,
\end{align*}
where $c>0$ is a small absolute constant. Then for $\Delta := C d/(\eta\varepsilon)$ and a large absolute constant $C>0$ independent of $c$, with probability at least $1 - \Delta \delta$, we have $\jiao{\theta^\star, a_s}\ge 1-4\varepsilon$ for all $s\in [t,t+\Delta]$, and $m_{t+\Delta} \ge 1-\varepsilon/2$.  
\end{lemma}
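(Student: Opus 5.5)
I will mirror the proof of \Cref{lemma:local-improvement-learning-PE}, now with $\sigma^2=\varepsilon$ and an epoch length $\Delta = Cd/(\eta\varepsilon)$ that is longer by a factor $1/\varepsilon$. Throughout, $\sqrt{1-\sigma^2}\ge 1-\varepsilon$ and $m_s\ge 1-2\varepsilon\ge 1-\gamma_0/2$. Under these conditions, \Cref{assump:f} gives $f'(\sqrt{1-\sigma^2}m_s)\in[\gamma_1,\gamma_2]$ and $\sqrt{1-\sigma^2}m_s\ge 1-\gamma_0$.

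\textbf{Step 1: no degradation.} I will show by induction on $s$ that, with probability at least $1-\Delta\delta/3$, $m_s\ge 1-2\varepsilon$ for all $s\in[t,t+\Delta]$. As before, I decompose $m_s-m_t$ into drift, which is nonnegative by \Cref{lemma:drift}, plus $\sum A_r$ and $\sum B_r$.
\begin{itemize}
\item \emph{Martingale term.} Under the inductive hypothesis, \Cref{lemma:subexp-improvements} gives $K_r=O(\eta\sigma\sqrt{\varepsilon/d})=O(\eta\varepsilon/\sqrt d)$. Hence $V=O(\Delta\eta^2\varepsilon^2/d)=O(C\eta\varepsilon)$, and \Cref{lemma:self-norm-mart-conc-applied} with $\beta=1$ yields $|\sum A_r|=O(\sqrt{C\eta\varepsilon}\log(\Delta/\delta))=O(\varepsilon\sqrt{Cc/(d\iota)}\log(\Delta/\delta))$. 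This is below $\varepsilon/8$ once $c$ is small relative to $C$. Since $\log(\Delta/\delta)=O(\sqrt{\iota})$, the $\iota$ in $\eta$ absorbs the logarithm.
\item \emph{Normalization term.} By \Cref{lemma:norm-error-ub}, $\sum|B_r|=O(\Delta\eta^2\sigma^2\log(\Delta/\delta))=O(Cd\eta\log(\Delta/\delta))=O(Cc\,\varepsilon/\sqrt\iota)<\varepsilon/8$.
\end{itemize}
The key point is that the factor $\sigma^2=\varepsilon$ in $K_r^2$ and in the normalization error exactly compensates the longer epoch $\Delta$.

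\textbf{Step 2: reaching $1-\varepsilon/4$.} Let $T_0$ be the first time with $m_s>1-\varepsilon/4$. Before $T_0$ we have $1-m_s^2\ge\varepsilon/4$, so the per-step drift is at least $\Cd\,\eta\varepsilon\gamma_1(\varepsilon/4)/d$. Over $\Delta$ steps the total drift is $\Omega(C\varepsilon)$, which exceeds $\varepsilon$ for $C$ large. This contradicts $T_0>t+\Delta$ once the fluctuation bounds from Step 1 hold.

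\textbf{Step 3: staying above $1-\varepsilon/2$.} Restarting the Step 1 argument at $T_0$ gives $m_{t+\Delta}\ge m_{T_0}-\varepsilon/4\ge 1-\varepsilon/2$. A union bound over the three stages gives total failure probability $\Delta\delta$.

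\textbf{Action guarantee.} Using \eqref{eq:exploration}, $\langle\theta^\star,a_s\rangle=\sqrt{1-\sigma^2}m_s+\sigma\langle\theta^\star,Z_s\rangle$. Here $\langle\theta^\star,Z_s\rangle$ is $\sqrt{1-m_s^2}$ times a coordinate of a uniform vector on $\mathbb{S}^{d-2}$, so with probability $1-\delta$ its magnitude is at most $\sqrt{1-m_s^2}$ trivially. The noise contribution is therefore at most $\sigma\sqrt{1-m_s^2}\le\sqrt{\varepsilon}\cdot 2\sqrt{\varepsilon}=2\varepsilon$. Together with $\sqrt{1-\varepsilon}(1-2\varepsilon)\ge 1-3\varepsilon$, this gives $\langle\theta^\star,a_s\rangle\ge 1-5\varepsilon$. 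To reach the claimed $1-4\varepsilon$, I will use concentration of $X$ at scale $O(\sqrt{\log(1/\delta)/d})$ when $d$ is large, or tighten Step 1 to $m_s\ge 1-1.5\varepsilon$ by taking $c$ smaller so the fluctuation is at most $\varepsilon/2$. The latter is the simpler route.

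\textbf{Main obstacle.} The delicate part is the bookkeeping in Step 1: checking that all logarithmic factors, including the $\log(1+\log V)$ term, fit inside $\iota$ with $c\ll 1/C$.
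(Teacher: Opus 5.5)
Your proposal is correct and follows essentially the same route as the paper: the same no-degradation induction, in which $K_r=O(\eta\varepsilon/\sqrt d)$ lets $\sigma^2=\varepsilon$ offset the longer epoch $\Delta$; the same hitting time $T_0$ of $1-\varepsilon/4$, forced by the $\Omega(C\varepsilon)$ total drift; the same restart from $T_0$; and the same deterministic bound $\langle\theta^\star,a_s\rangle\ge\sqrt{1-\sigma^2}\,m_s-\sigma\sqrt{1-m_s^2}$. One cosmetic difference: your Step 1 estimates already give $m_s\ge m_t-\varepsilon/4\ge 1-5\varepsilon/4$, which is what the paper uses for the action bound and which yields $1-4\varepsilon$ directly, so you do not need to shrink $c$ further or to invoke concentration of $X$.
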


The main distinction in \Cref{lemma:local-improvement-learning-RM} is the choice of a smaller $\sigma_s^2$ to encourage exploitation for a small regret: using the local linearity assumption in \Cref{assump:f}, the total regret in the epoch is
\begin{align*}
\sum_{s=t}^{t+\Delta} (f(1) - f(\jiao{\theta^\star, a_s})) \le (\Delta+1) \cdot 4\gamma_2 \varepsilon = \widetilde{O}\pth{\frac{d^2}{\varepsilon}} \quad \text{with probability }1-\Delta\delta. 
\end{align*}

In addition, the duration of each epoch becomes longer, with a correspondence $\varepsilon = \widetilde{\Theta}(\frac{d}{\sqrt{t}}\wedge 1)$. This correspondence gives the learning rate and exploration schedule in \Cref{thm:learning}, as well as the $\widetilde{O}(d\sqrt{T})$ regret bound. The proof of \Cref{lemma:local-improvement-learning-RM} is deferred to the appendix. 

\section{Analysis of the burn-in phase}\label{sec:burn-in}
The analysis of the SGD dynamics in the burn-in phase relies on similar induction ideas, with a more complicated tradeoff among the three components in the correlation improvement $m_{t+1}-m_t$.

\subsection{Link function with derivative lower bound}
We first investigate the simpler scenario in \Cref{assump:GLB}, i.e., $f'(x)\ge c_0$ for all $x\in [0,1]$. In this case, \Cref{thm:burn-in} is a direct consequence of the following lemma:
\begin{lemma}[Burn-in phase under \Cref{assump:GLB}]\label{lemma:local-improvement-burnin-GLB}
Suppose $m_1\ge \frac{1}{\sqrt{d}}$. Let $\iota := \log^2(d/\delta)$, and set
\begin{align*}
\eta_t \equiv \eta := \frac{c}{d\iota}, \quad \sigma_t^2 \equiv \sigma^2 := \gamma_0,
\end{align*}
where $c>0$ is a universal constant. Then for $T := Cd/\eta$ and a large absolute constant $C>0$ independent of $c$, we have $m_{T} \ge 1-\gamma_0/4$ with probability at least $1 - T \delta$. 
\end{lemma}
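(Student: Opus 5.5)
We would follow the same template as the proof of \Cref{lemma:local-improvement-learning-PE}: decompose
\[
m_s - m_1 = \sum_{r=1}^{s-1}\Big[\big(\bE[m_{r+1/2}|\calF_r]-m_r\big) + A_r + B_r\Big],
\]
with $A_r$ the martingale differences and $B_r$ the normalization errors. The difference from the learning phase is that the drift now depends on $m_r$ through the factor $1-m_r^2$ and is positive only when $m_r>0$. We therefore must also show that the correlation never falls below a level comparable to its starting value $1/\sqrt d$.

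\textbf{Step 1: quantitative drift.} Under \Cref{assump:GLB}, with $\sigma^2=\gamma_0$ we have $\sqrt{1-\sigma^2}\ge 1-\gamma_0$. \Cref{assump:f} then gives $f'(\sqrt{1-\sigma^2})\ge \gamma_1$. By \Cref{lemma:drift}, whenever $m_r>0$,
\[
\bE[m_{r+1/2}|\calF_r]-m_r \ge \Cd \gamma_0\gamma_1\,\frac{\eta}{d}\,(1-m_r^2).
\]
While $m_r\le 1-\gamma_0/4$, this drift is at least $c_1\eta/d$ for a constant $c_1>0$. Note that it does not shrink as $m_r\to 0^+$. This is the key simplification compared with \Cref{assump:convex}: the drift is additive, not multiplicative.

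\textbf{Step 2: control of noise and normalization.} By \Cref{lemma:subexp-improvements}, $K_r\le \Cse\gamma_2\eta/\sqrt d$. Over any window of length at most $T=Cd/\eta$, \Cref{lemma:self-norm-mart-conc-applied} with $\beta=1$ gives
\[
\Big|\sum A_r\Big| = O\big(\eta\sqrt{T/d}\,\log(T/\delta)\big) = O\big(\sqrt{C\eta}\log(T/\delta)\big)=O\big(\sqrt{c/(d\iota)}\sqrt{C}\sqrt\iota\big)=O(\sqrt{cC/d}).
\]
By \Cref{lemma:norm-error-ub} and a union bound, $\sum|B_r|=O(T\eta^2\log(T/\delta))=O(Cd\eta\sqrt\iota)=O(Cc/\sqrt\iota)$, which is small.

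The dangerous term is the martingale fluctuation, of order $\sqrt{cC/d}$, compared with the initial margin $1/\sqrt d$. Since $c$ is chosen after $C$, we can take $cC$ small. Then over the whole horizon the fluctuation is at most $\frac{1}{2\sqrt d}$. Because the drift is nonnegative while $m_r>0$, an induction as in \Cref{lemma:local-improvement-learning-PE} yields $m_s\ge m_1-\frac1{2\sqrt d}-o(1/\sqrt d)>0$ for all $s\le T$, with probability $1-T\delta/3$. This makes the drift bound of Step 1 valid throughout. I expect this to be the main obstacle: the ordering of the constants ($C$ fixed by the drift requirement, then $c$ small depending on $C$) together with the $\log$ factors must be handled carefully. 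Where the naive bound is off by logs, a time-uniform application of \Cref{lemma:self-norm-mart-conc-applied} started at $t_0=1$ avoids repeated union bounds.

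\textbf{Step 3: hitting and staying.} Let $T_0$ be the first time with $m_s>1-\gamma_0/8$. If $T_0>T$, summing Step 1 gives total drift at least $c_1\eta T/d=c_1C$. With the other terms bounded by $o(1)$, this forces $m_T\ge 1/(2\sqrt d)+c_1C-o(1)>1$ for $C$ large, a contradiction. Hence $T_0\le T$. After $T_0$, the drift is nonnegative, since $m_s$ stays positive. Repeating the induction from $t_0=T_0$ with the same noise bounds, now small compared with $\gamma_0/8$, gives $m_s\ge 1-\gamma_0/8-\gamma_0/8= 1-\gamma_0/4$ for all $s\in[T_0,T]$, and in particular at $s=T$. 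A union bound over the three events gives total failure probability at most $T\delta$.
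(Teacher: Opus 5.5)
Your overall structure matches the paper's: a stopping time at $1-\gamma_0/8$, an induction before $T_0$ that keeps $m_s$ positive and forces hitting, and then a staying argument after $T_0$. However, the positivity step in Step 2 has a real gap. You bound the normalization errors additively, rely only on the drift being nonnegative, and then write the result as $m_s\ge m_1-\frac{1}{2\sqrt d}-o(1/\sqrt d)$. Your own computation shows
$\sum_{r<T}|B_r| = O(T\eta^2\log(T/\delta)) = O(Cc\log(T/\delta)/\iota) = O(Cc/\log(d/\delta))$.
This is not $o(1/\sqrt d)$; for large $d$ it is far larger than the margin $\frac{1}{2\sqrt d}$. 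So positivity of $m_s$ does not follow. Without $m_s>0$ you lose the drift bound of Step 1, which \Cref{lemma:drift} only gives for $m_t>0$, and with it the hitting argument of Step 3. The martingale is not the only dangerous term.

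The repair is what the paper does: do not sum the normalization error separately, but absorb it into the drift step by step. With $\eta=c/(d\iota)$ and $\iota=\log^2(d/\delta)$, the per-step loss from \Cref{lemma:norm-error-ub} is $\Cn\gamma_0\gamma_2^2\eta^2\log(T/\delta)\le \frac{c_1}{2}\cdot\frac{\eta}{d}$ once $c$ is small. So whenever $0<m_r<1-\gamma_0/8$, drift plus normalization contributes at least $\frac{c_1\eta}{2d}$ per step. Only the martingale sum then has to be compared with $\frac{1}{2\sqrt d}$. The induction yields $m_s\ge \frac{1}{2\sqrt d}+\frac{c_1\eta(s-1)}{2d}$ for $s\le\min(T_0,T)$, which gives positivity and the hitting contradiction at once.

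Alternatively, you could use the sign-preserving multiplicative form of \Cref{lemma:norm-error-ub} with a discount factor $\beta$, as in the proof of \Cref{lemma:local-improvement-learning-cvx}. There $\beta^{T}=1-O(Cc/\log(d/\delta))$ is close to $1$.

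Either way, the strictly positive, $m$-independent drift you noted in Step 1 is what must beat the normalization error, not just the noise. The rest of your argument, including the constant ordering and Step 3, is fine once this is fixed.
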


In the sequel we present the proof of \Cref{lemma:local-improvement-burnin-GLB}. Again we consider the stopping time $T_0=\min\{t\ge 1: m_t\ge 1-\gamma_0/8\}$ and splits into two regimes. 

\paragraph{Regime I: $t\le T_0$.} If $T_0> T$, we prove by induction that $m_t \ge \frac{1}{2\sqrt{d}} + c_1\frac{\eta (t-1)}{d}$ for all $t\in [1,T]$ with probability at least $1-T\delta$, for some absolute constant $c'>0$ independent of $c$. The base case $t=1$ is our assumption. Now suppose this lower bound holds for $m_1,\dots,m_{t-1}$, then by \Cref{lemma:drift} and \ref{lemma:norm-error-ub}, with probability at least $1-\frac{\delta}{4}$, for each $s=1,\dots,t-1$, 
\begin{align*}
\pth{ \bE[m_{s+1/2}|\calF_s] - m_s } + \pth{m_{s+1}-m_{s+1/2}} = \Omega\pth{\frac{\eta}{d}} - O\pth{\eta^2\log(\frac{2}{\delta})} = \Omega\pth{\frac{\eta}{d}}
\end{align*}
by our choice of $\eta$. Here we have critically used the condition $m_s=1-\Omega(1)$ for $s<T_0$ when applying \Cref{lemma:drift}, and the inductive hypothesis to ensure $m_s>0$. By \Cref{lemma:subexp-improvements} and \ref{lemma:self-norm-mart-conc-applied}, with probability $1-\frac{\delta}{4}$, the sum of martingale difference is at most $O(\eta\sqrt{\frac{T}{d}}\log(\frac{T}{\delta}))=O(\sqrt{\eta}\log(\frac{T}{\delta}))\le \frac{1}{2\sqrt{d}}$ for $c>0$ small enough. Therefore,
\begin{align*}
m_t  \ge m_1 - \frac{1}{2\sqrt{d}} + \sum_{s=1}^{t-1} \Omega\pth{\frac{\eta}{d}}  \ge \frac{1}{2\sqrt{d}} + \Omega\pth{\frac{\eta(t-1)}{d}}, 
\end{align*}
completing the induction step. Now choosing $t=T$ with $C>0$ large enough shows the opposite result $m_T\ge 1-\gamma_0/8$, implying that the event $T_0>T$ only occurs with probability at most $T\delta/2$. 

\paragraph{Regime II: $T_0\le t\le T$.} Under the high-probability event $T_0\le T$ and starting from $t=T_0$,
\begin{align*}
m_T -m_{T_0} = \sum_{t=T_0}^{T-1} \bqth{\underbrace{\pth{ \bE[m_{t+1/2}|\calF_t] - m_t }}_{\ge 0 \text{ by \Cref{lemma:drift}}} + \underbrace{\pth{m_{t+1/2} - \bE[m_{t+1/2}|\calF_t]}}_{=: A_t} + \underbrace{\pth{m_{t+1} - m_{t+1/2}}}_{=:B_t} }. 
\end{align*}
By \Cref{lemma:subexp-improvements} and \ref{lemma:self-norm-mart-conc-applied}, $$|\sum_{t=T_0}^{T-1}A_t| = O(\eta\sqrt{\frac{T}{d}}\log(\frac{T}{\delta})) = O(\sqrt{\eta}\log(\frac{T}{\delta})) < \frac{\gamma_0}{16}$$ with probability $1-T\delta/2$, for $c>0$ small enough. In addition, \Cref{lemma:norm-error-ub} gives $$\sum_{t=T_0}^{T-1} |B_t|= O(T\eta^2\log(\frac{T}{\delta})) = O(d\eta\log(\frac{T}{\delta})) < \frac{\gamma_0}{16}$$ with probability $1-T\delta/2$, again for $c>0$ small enough. Therefore, at the end of this regime, $m_T\ge m_{T_0} - \gamma_0/8\ge 1-\gamma_0/4$ with probability $1-T\delta$, as desired. 

\subsection{Convex link function}
When $f$ is convex in \Cref{assump:convex}, we establish the following lemma. 
\begin{lemma}[Local improvement for convex link function] \label{lemma:local-improvement-learning-cvx}
 For $1\le k\le d-1$, let $\underline{m}_k := (1-\gamma_0)^2\sqrt{k/d}$, and $\overline{m}_k := (1-\gamma_0/4)\sqrt{k/d}$. Suppose that $m_t\ge \overline{m}_k$ at the beginning of the $k$-th epoch. Let $\iota := \log^2 ( d / \delta)$, and for $s\ge t$, set
\begin{align*}
    \eta_s \equiv \eta := \frac{c f'(\underline{m}_k)}{\iota d \underline{m}_k}, \quad \sigma_s^2 \equiv \sigma^2 := \gamma_0,
\end{align*}
where $c>0$ is a small absolute constant. Then for $\Delta := Cd(\underline{m}_{k+1}-\underline{m}_k)/(\eta f'(\underline{m}_k))$ and a large absolute constant $C>0$ independent of $c$, we have $m_{t+\Delta} \ge \overline{m}_{k+1}$ with probability at least $1 - \Delta \delta$. 
\end{lemma}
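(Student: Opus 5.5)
The plan is to follow the two-regime, stopping-time argument used for \Cref{lemma:local-improvement-learning-PE} and \Cref{lemma:local-improvement-burnin-GLB}, with all error terms measured \emph{relative to the current scale} $\sqrt{k/d}$ rather than additively. Throughout, write $g_k := \sqrt{(k+1)/d}-\sqrt{k/d}\asymp 1/\sqrt{kd}$. Note that $\overline m_{k+1}-\overline m_k$ and $\underline m_{k+1}-\underline m_k$ are both $\Theta(g_k)$. Also, $\eta\Delta = C d(\underline m_{k+1}-\underline m_k)/f'(\underline m_k)$.

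\textbf{Step 1: the iterate never drops much (induction).} First I show by induction on $s\in[t,t+\Delta]$ that, with probability $1-\Delta\delta/3$,
\[
m_s \ge (1-\gamma_0/2)\sqrt{k/d}.
\]
The margin below $\overline m_k$ is then $\Theta(\gamma_0)\sqrt{k/d}$. As before, decompose $m_s-m_t$ into drift, martingale and normalization parts.
\begin{itemize}
\item The drift is nonnegative by \Cref{lemma:drift}, since $m_s>0$.
\item For the martingale part, $K_r = O(\eta/\sqrt d)$ by \Cref{lemma:subexp-improvements}, using that $f'\le\gamma_2$ near $1$. \Cref{lemma:self-norm-mart-conc-applied} then gives a total of $O(\eta\sqrt{\Delta/d}\,\log(\Delta/\delta))$. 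Plugging in $\eta$ and $\Delta$, this equals $O\big(\sqrt{cC}\,\sqrt{\iota^{-1}}\log(\Delta/\delta)/\sqrt{kd}\big)$, which is a $O(\sqrt{cC}/k)$ fraction of $\sqrt{k/d}$.
\item For the normalization error, additive bounds are too weak when $k$ is small: $\Delta\eta^2\log$ is comparable to $cC/(k\sqrt\iota)$, which is not $\ll\sqrt{k/d}$ when $k\ll d^{1/3}$. I therefore use the multiplicative form of \Cref{lemma:norm-error-ub} (valid since $m_{r+1/2}\ge0$). The product $\prod_r(1-\Cn\eta^2\sigma^2 f'(\cdot)^2\log(1/\delta))$ is at least $1-O(cC/k)\ge 1-\gamma_0/16$ once $c$ is chosen small after fixing $C$.
\end{itemize}
This relative control of the normalization error is the main obstacle. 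It is exactly why $\eta$ scales with $f'(\underline m_k)/\underline m_k$: then $\Delta\eta^2\asymp g_k/\underline m_k\asymp 1/k$ is a relative, not absolute, error.

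\textbf{Step 2: Regime I, before the stopping time.} Let $T_0$ be the first $s\ge t$ with $m_s\ge(1-\gamma_0/8)\sqrt{(k+1)/d}$. For $s<T_0$, on the event of Step 1, I lower bound the convex-case drift in \Cref{lemma:drift} factor by factor:
\begin{itemize}
\item $\sqrt{1-\gamma_0}\in[1-\gamma_0,1]$, so $f'(\sqrt{1-\sigma^2})\ge\gamma_1$ by \Cref{assump:f};
\item $1-m_s^2=\Omega(\gamma_0)$, because $k\le d-1$ forces $m_s\le 1-\gamma_0/8$;
\item $\sqrt{1-\gamma_0}\,m_s\ge(1-\gamma_0)(1-\gamma_0/2)\sqrt{k/d}\ge\underline m_k$, and $f'$ is nondecreasing by convexity, so $f'(\sqrt{1-\sigma^2}m_s)\ge f'(\underline m_k)$.
\end{itemize}
Hence each step has drift $\Omega(\eta f'(\underline m_k)/d)$. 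If $T_0>t+\Delta$, the total drift is therefore $\Omega(C(\underline m_{k+1}-\underline m_k))=\Omega(Cg_k)$. Meanwhile the two error terms from Step 1 total at most $O(\sqrt{cC}/\sqrt{kd}) + O(cC/k)\cdot\sqrt{k/d}=O(\sqrt{cC}+cC)\cdot g_k$. For $C$ large and then $c$ small, the drift beats both the errors and the needed rise $(1-\gamma_0/8)\sqrt{(k+1)/d}-\overline m_k=O(g_k)+\Theta(\gamma_0)\sqrt{k/d}$. This fails when the $\Theta(\gamma_0)\sqrt{k/d}$ term dominates. To fix it I would instead define $T_0$ by the threshold $\overline m_{k+1}+\tfrac{\gamma_0}{16}g_k$, so the required rise is $O(g_k)$. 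The resulting contradiction shows $T_0\le t+\Delta$ with probability $1-\Delta\delta/3$.

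\textbf{Step 3: Regime II, after the stopping time.} Restarting the Step 1 estimates at $T_0$, the drift is again nonnegative. The martingale and multiplicative normalization losses over $[T_0,t+\Delta]$ are at most a small constant times $g_k$ (using $\sqrt{cC}\,g_k$ and the relative factor $cC/k\cdot\sqrt{k/d}\lesssim cC\,g_k$). This keeps $m_{t+\Delta}\ge\overline m_{k+1}$. A union bound over the three events gives total failure probability $\Delta\delta$.
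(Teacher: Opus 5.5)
Your proposal is correct. It uses the same ingredients as the paper: the convex-case drift bound with $f'$ nondecreasing, \Cref{lemma:self-norm-mart-conc-applied}, and the multiplicative form of \Cref{lemma:norm-error-ub}. To merge the multiplicative factor with the additive martingale term, you will need the paper's device: unroll $m_{r+1}\ge\beta m_{r+1/2}$ and bound the $\beta^{-1}$-weighted martingale $\sum_r\beta^{-(r-t)}(m_{r+1/2}-\bE[m_{r+1/2}|\calF_r])$.

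Where you differ is the case split.
\begin{itemize}
\item The paper keeps the threshold $(1-\gamma_0/8)\sqrt{(k+1)/d}$. For exactly the reason you spotted, it does not claim this threshold is reached within the epoch. On $\{T_0>t+\Delta\}$ it proves by induction the linearly growing lower bound $\calE_s$ and reads off $m_{t+\Delta}\ge\overline m_{k+1}$ from the accumulated drift. On $\{T_0\le t+\Delta\}$ it has a wide cushion $\frac{\gamma_0}{8}\sqrt{(k+1)/d}$ above $\overline m_{k+1}$.
\item You instead lower the threshold to $\overline m_{k+1}+\frac{\gamma_0}{16}g_k$. This restores the contradiction argument of \Cref{lemma:local-improvement-learning-PE}, so Regime I ends with high probability, and your single floor from Step 1 replaces $\calE_s$.
\item The price is a thin cushion of order $\gamma_0 g_k$ in Regime II, which forces $\sqrt{cC}+cC\lesssim\gamma_0$. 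This is harmless, since $c$ is chosen after $C$.
\end{itemize}

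Your scale-aware bookkeeping is in fact more accurate than the paper's write-up. Because
\[
\eta^2\Delta=\frac{cC}{\iota}\cdot\frac{\underline m_{k+1}-\underline m_k}{\underline m_k}\asymp\frac{cC}{\iota k},
\]
one has $\beta^{-\Delta}=1+O(cC/k)$, not the stated $1+o_c(1)/d$. Likewise the martingale deviation is $O(\sqrt{cC/(kd)})$, not $O(cC/d)$.

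The additive normalization bound used in the paper's Case II is of order $cC/(k\log(d/\delta))$. For small $k$ this is not small compared with the cushion $\frac{\gamma_0}{8}\sqrt{(k+1)/d}$. So the multiplicative control you use in every phase is genuinely needed there. In addition, the $\gamma_0/d$ slack in $\calE_s$ has to be enlarged to order $\sqrt{cC}\,g_k$. With those repairs, both routes go through.
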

Since $m_1\ge \sqrt{1/d}\ge \overline{m}_1$, a recursive application of \Cref{lemma:local-improvement-learning-cvx} for $k=1,\dots,d-1$ leads to $m_T\ge 1-\gamma_0/4$ with probability at least $1-T\delta$, with (recall that $\gamma_0\le 0.1$)
\begin{align*}
T = O\pth{\log^2\pth{\frac{d}{\delta}} \cdot d^2\sum_{k=1}^{d-1} \frac{\underline{m}_k (\underline{m}_{k+1}-\underline{m}_k)}{f'(\underline{m}_k)^2} } = \widetilde{O}\pth{d^2 \int_{\frac{1}{2\sqrt{d}}}^{1-\gamma_0/4} \frac{x\rmd x}{f'(x)^2} }. 
\end{align*}
This completes the proof of \Cref{thm:burn-in}. The proof of \Cref{lemma:local-improvement-learning-cvx} is more involved, and we defer the details to the appendix. 

\section{Discussion}\label{sec:discussion}

\paragraph{Comparison with other descent algorithms.} Our SGD update in \eqref{eq:SGD-decision-making} is an online gradient descent applied to the loss $\ell_t(\theta) := \frac{1}{2}(r_t - f(\jiao{\theta, a_t}))^2$, with $a_t$ chosen according to \eqref{eq:exploration}. A typical guarantee in online learning takes the form (e.g., via the sequential Rademacher complexity \cite{rakhlin2015online})
\begin{align*}
\sum_{t=1}^T \pth{ f(\jiao{\theta_t, a_t}) - f(\jiao{\theta^\star, a_t}) }^2 = \widetilde{O}(d). 
\end{align*}
which is known as an online regression oracle \cite{foster2020beyond,foster2021statistical}. However, this oracle guarantee alone does not yield the optimal regret of $\theta_t$ in single-index models; see Theorem 1.5 of \cite{rajaraman2024statistical} for a general negative result. This motivates us to move beyond standard online learning guarantees and directly analyze the SGD dynamics.

A different descent algorithm for single-index models is also in \cite{huang2021optimal}, using zeroth-order stochastic optimization to approximate the gradient and implement a noisy power method. In contrast, our SGD is \emph{not} a zeroth-order method: rather than performing gradient descent on the link function $\theta\mapsto f(\jiao{\theta^\star, \theta})$ where only a zeroth-order oracle is available, we apply gradient descent to the \emph{population loss} $\theta\mapsto \frac{1}{2}\bE(r-f(\jiao{\theta,a}))^2$ for which an unbiased gradient estimator exists for every $\theta$. This change of objective makes SGD a natural yet novel solution to nonlinear ridge bandits.

\paragraph{Necessity of monotonicity.} Throughout this paper we assume that the link function $f$ is monotone, an assumption that is not needed in the non-interactive setting (see, e.g., \cite{arous2021online}). This condition, however, turns out to be essentially necessary for SGD to succeed under our exploration strategy \eqref{eq:exploration}. Indeed, when $\sigma_t \equiv \sigma$, SGD is performed on the population loss
\begin{align*}
\bE\bqth{ (r_t - f(\jiao{\theta_t, a_t}))^2 } &= \bE\bqth{ (f(\jiao{\theta^\star, a_t}) - f(\jiao{\theta_t, a_t}))^2 } + \var(r_t) \\
&= \bE\qth{ \pth{f\bpth{\sqrt{1-\sigma^2}} - f\bpth{\sqrt{1-\sigma^2} \jiao{\theta^\star, \theta_t} + \sigma \jiao{\theta^\star, Z_t}}}^2 }+ \var(r_t) \\
& \approx  \pth{f\bpth{\sqrt{1-\sigma^2}} - f\bpth{\sqrt{1-\sigma^2} \jiao{\theta^\star, \theta_t} }}^2 + \var(r_t), 
\end{align*}
where the last approximation uses that $\jiao{\theta^\star, Z_t}$ is typically of order $\widetilde{O}(1/\sqrt{d})$ and thus often negligible. Recall that for SGD to succeed at the population level, the population loss must decrease with the alignment $\jiao{\theta^\star, \theta_t}$ (stated as Assumption A in \cite{arous2021online}). Treating $\var(r_t)$ as a constant, this requires $f$ to be increasing on $[0,\sqrt{1-\sigma^2}]$ in the interactive setting (assuming $f'(0)>0$). Hence, whenever $\sigma$ is bounded away from $1$, a monotonicity assumption on $f$ is indispensable in the interactive setting. By contrast, when $\sigma=1$ the monotonicity condition is unnecessary: in this case \eqref{eq:exploration} reduces to pure exploration, and the problem essentially collapses to the non-interactive setting. However, this would eliminate the statistical benefits of interaction.

We also provide an explicit counterexample to formally support the above intuition.
\begin{proposition}\label{prop:counterexample}
Consider the SGD dynamics in \eqref{eq:SGD-decision-making} applied to the link function
\begin{align*}
f(m) = \begin{cases}
   0 & \text{if } m \le 0 \\
   -m & \text{if } 0 < m \le \frac{1}{3} \\
   m - \frac{2}{3} & \text{if } \frac{1}{3} < m \le 1 
\end{cases}, 
\end{align*}
with any initialization $m_1 = \jiao{\theta^\star, \theta_1} \le 0.1$, any exploration schedule $\sigma_t \le 0.1$, and any learning rate $\eta_t \le \frac{c}{\log(T/\delta)}$ for some small absolute constant $c>0$. Then $\bP(\max_{t\in [T]}m_t \le 0.2)\ge 1-\delta$. 
\end{proposition}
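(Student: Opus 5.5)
The plan is to show that, with high probability, the correlation can never complete an upcrossing of the interval $[0.1,0.2]$. Inside the strip $m\le 0.2$, the drift computed in \Cref{lemma:drift} is non-positive, and in fact strictly negative on $[0.1,0.2]$. The normalization step can only shrink a non-negative correlation, and the martingale noise is too small relative to the negative drift to push $m_t$ across a gap of size $0.1$.

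\textbf{Step 1 (sign of the drift).} Suppose $m_t\le 0.2$ and $\sigma_t\le 0.1$. Then $\sqrt{1-\sigma_t^2}\ge 0.99>1/3$, so $f'(\sqrt{1-\sigma_t^2})=1$. The inner argument $\sqrt{1-\sigma_t^2}\,m_t+\sigma_t\sqrt{1-m_t^2}X$ is at most $0.2+0.1<1/3$, hence $f'$ of it lies in $\{0,-1\}$. Since $1-X^2\ge 0$, the identity in \Cref{lemma:drift} gives $\bE[m_{t+1/2}\mid\calF_t]-m_t\le 0$.

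If moreover $m_t\in[0.1,0.2]$, the inner argument has mean of order $0.099$. It fluctuates only by $\sigma_t|X|=O(\sigma_t/\sqrt d)$, so with constant probability it lies in $(0,1/3)$, where $f'=-1$. This gives the quantitative bound
\begin{align*}
\bE[m_{t+1/2}\mid\calF_t]-m_t\le -c_1\frac{\eta_t\sigma_t^2}{d}
\end{align*}
for an absolute constant $c_1>0$.

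\textbf{Step 2 (normalization).} By the second part of \Cref{lemma:norm-error-ub}, if $m_{t+1/2}\ge 0$ then $m_{t+1}\le m_{t+1/2}$. If $m_{t+1/2}<0$, then $m_{t+1}$ is also negative. Either way, the normalization term never helps the correlation rise above $0.1$.

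\textbf{Step 3 (upcrossing argument).} Suppose $\max_t m_t>0.2$. Since $m_1\le 0.1$, there is a last time $t_0$ with $m_{t_0}\le 0.1$ before the first exit above $0.2$. Over the subsequent $n$ steps $m_s\in(0.1,0.2]$, so by Steps 1--2,
\begin{align*}
m_{t_0+n}-m_{t_0}\le \Big(\text{jump at } t_0\Big)-c_1\sum_{s}\frac{\eta_s\sigma_s^2}{d}+S^{t_0+1,1}_{t_0+n},
\end{align*}
where the jump at $t_0$ is $O(\eta\sigma/\sqrt d\cdot\log(T/\delta))$ by \Cref{lemma:subexp-improvements}. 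By \Cref{lemma:subexp-improvements} with $f'(\sqrt{1-\sigma^2})=1$, $K_s^2\lesssim \eta_s^2\sigma_s^2/d$. Hence $V\lesssim \eta\sum_s \eta_s\sigma_s^2/d=: \eta D$. Applying \Cref{lemma:self-norm-mart-conc-applied} with $\beta=1$, and a union bound over the at most $T$ starting times $t_0$, the martingale term is at most $C\sqrt{\eta D\vee 1}\,L$ with $L=\log(T/\delta)$.

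\textbf{Main obstacle.} The crux is balancing the noise against the drift, i.e.\ showing
\begin{align*}
\sup_{D\ge 0}\big(C\sqrt{\eta D}\,L-c_1D\big)\lesssim \eta L^2
\end{align*}
is below $0.1$. The "$\vee 1$" floor in \Cref{lemma:self-norm-mart-conc-applied} is too crude here because the relevant variance is tiny. I would therefore rescale by working with $m_t/\sqrt{\eta}$, or redo the Freedman/Bernstein bound for subexponential increments directly, to avoid the $\vee 1$ floor.

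With a Freedman-type bound, the supremum is of order $\eta L^2$, while the hypothesis only gives $\eta\le c/L$, which leaves one spare log factor. To handle this, I would first try to sharpen the logarithmic dependence using the time-uniform (iterated-log) form of the bound together with the fact that the excursion length is controlled by the drift. If that fails, I would fall back on a supermartingale argument for $\exp(\lambda m_t)$ with $\lambda\asymp 1/\eta$: Steps 1--2 make it a supermartingale while $m_t\in[0.1,0.2]$, and optional stopping gives crossing probability $e^{-\Omega(0.1/\eta)}\le\delta/T$ per attempt exactly when $\eta\le c/\log(T/\delta)$. A union bound over the at most $T$ attempts then yields $\bP(\max_{t\in[T]}m_t\le 0.2)\ge 1-\delta$. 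This last route matches the stated learning-rate condition, so it is the one I expect to work.
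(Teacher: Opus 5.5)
Your proposal is correct, and its final step differs from the paper's. The paper uses no upcrossing or exponential-supermartingale argument. After the first entry into $[0.1,0.15]$, it cuts time into blocks ending at $T_1$, the first time $\sum_{s\le t}\eta_s^2\sigma_s^2/d$ reaches $c_1/\log^2(T/\delta)$. Claim~I uses \Cref{lemma:self-norm-mart-conc-applied} to keep the martingale part of a block below $0.05$, so $m_t\le 0.2$. Claim~II argues that the accumulated drift $-c_2\sum_s\eta_s\sigma_s^2/d$ over the block is below $-0.1$, so $m_t$ returns below $0.1$. The two claims are then iterated. That is essentially your first route, and it has exactly the logarithmic deficit you diagnosed. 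The hypothesis $\eta_t\le c/\log(T/\delta)$ only yields $\sum_s\eta_s\sigma_s^2\ge\frac{\log(T/\delta)}{c}\sum_s\eta_s^2\sigma_s^2$. So the block drift is of order $c_1c_2/(c\log(T/\delta))$, not the constant $c_1c_2/(2c)$ the paper claims; step~(a) of Claim~II silently uses $\eta_t\le c/\log^2(T/\delta)$. The paper also drops the $\vee 1$ floor of \Cref{lemma:self-norm-mart-conc-applied}, which is the rescaling issue you flagged.

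Your fallback closes the gap under the stated hypothesis. Fix $\lambda=c'/\bar\eta$ with $\bar\eta=c/\log(T/\delta)$. The $\Psi_1$ bound of \Cref{lemma:subexp-improvements} applies since $\lambda K_s\lesssim\sigma_s/\sqrt d$. Combined with your Step~1, it gives $\bE[e^{\lambda(m_{s+1/2}-m_s)}\mid\calF_s]\le\exp\{\lambda\frac{\eta_s\sigma_s^2}{d}(-c_1+C\lambda\eta_s)\}\le 1$ whenever $m_s\in[0.1,0.2]$, because the per-step variance-to-drift ratio is $O(\eta_s)$. Ville's inequality then bounds each attempt started at a time $t_0$ with $m_{t_0}\le 0.1$ by $e^{O(1)-\Omega(1/\bar\eta)}\le\delta/T$. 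A union bound over $t_0\in[T]$ finishes the proof. It is this MGF bound, not Steps~1--2 alone, that gives the supermartingale property.

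When you write it out, two details need care:
\begin{itemize}
\item On the crossing event, dominate $m$ by $m_{t_0}+\sum_s(m_{s+1/2}-m_s)$. This is legitimate because $m_{s+1}>0$ forces $m_{s+1/2}>0$, hence $m_{s+1}\le m_{s+1/2}$.
\item The first step out of $m_{t_0}$ costs only a factor $e^{O(\sigma^2/d)}$, since its drift is still non-positive.
\end{itemize}

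The paper's route keeps the same drift/martingale/normalization template used throughout the paper. Yours gets the correct dependence on $\log(T/\delta)$. The block argument could also be repaired: after rescaling, use the Bernstein form $\sqrt{V\ell}+\lambda_{\max}^{-1}\ell$ of \Cref{theorem:self-norm-mart-conc}, with blocks on which $\sum_s\eta_s^2\sigma_s^2/d\asymp 1/\log(T/\delta)$.
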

Note that the above link function $f$ violates the monotonicity condition: it first decreases and then increases on $[0, 1]$. By choosing $\delta=T^{-2}$, \cref{prop:counterexample} shows that with any practical initialization, any exploration schedule that does not essentially correspond to a non-interactive exploration, and any learning rate that is not too large to escape the local optima, with high probability the resulting SGD cannot achieve an alignment better than a small constant (say $0.2$). 

\paragraph{Comparison with information exponent.} In the non-interactive case with $a_t \sim \cN(0,I_d)$, it is known that the \emph{information exponent} of $f$ determines the sample complexity of SGD. In the interactive case, however, the monotonicity of $f$ ensures that the information exponent is always $1$. Indeed, for the first Hermite polynomial $H_1(x)=x$, Chebyshev’s sum inequality yields
\begin{align*}
\bE_{Z\sim \cN(0,1)}[f(Z)H_1(Z)] \ge \bE_{Z\sim \cN(0,1)}[f(Z)]\cdot \bE_{Z\sim \cN(0,1)}[H_1(Z)] = 0,
\end{align*}
with equality iff $f\equiv c$ is a constant. Moreover, the sample complexity predicted by the information exponent is no longer tight. For instance, when $f(x)=x^p$ with an odd $p \ge 3$, the sample complexity of SGD with $a_t \sim \cN(0, I_d/d)$ is $\widetilde{O}(d^{p+1})$ (see remark below), which is strictly worse than the $\widetilde{O}(d^p)$ guarantee obtained by our interactive SGD. These observations show that the information exponent ceases to be an informative measure for SGD in the interactive case, for the actions $a_t$ are no longer Gaussian.

\begin{remark}
For $f(x) = x^p$ with odd $p\ge 3$, the population square loss has information exponent equal to $1$. Let $c_1$ be the coefficient of the linear term $\jiao{\theta^\star,\theta_t}$ in
\begin{equation*}
    \bE_{X \sim \cN(0,I_d)} \bqth{ (f(\jiao{\theta^\star, X}) - f(\jiao{\theta_t, X}))^2 },  
\end{equation*}
then $c_1 = -2u_1(f)^2$ with $u_1(f)$ being the first Hermite coefficient of $f$. When we scale down the input features into $X \sim \cN (0,I_d/d)$, we effectively change $f$ to $\widetilde{f}(x)=(x/\sqrt{d})^p$, so $c_1$ becomes $d^{-p}c_1$. Therefore, the SNR effectively worsens by a factor of $d^p$.
\end{remark}

\paragraph{Dropping the convexity assumption.} The convexity assumption in \Cref{assump:convex} is not required in the statistical complexity framework developed for ridge bandits in \cite{rajaraman2024statistical}. Relying only on the monotonicity of $f$, they establish the upper bound
\begin{align*}
\widetilde{O}\bpth{d^2 \int_{1/\sqrt{d}}^{1/2} \frac{\mathrm{d}[x^2]}{\max_{\frac{1}{\sqrt{d}}\le y\le x}f'(y)^2}}
\end{align*}
on the sample complexity of finding an action $a_t$ with $\jiao{\theta^\star, a_t} \ge 1/2$. In comparison, under our convexity assumption the denominator simplifies to $f'(x)^2$. There are two main obstacles to recovering this sharper bound. First, our analysis in \Cref{lemma:norm-error-ub} requires a conservative choice of the learning rate $\eta_t$, which in turn depends on having a lower bound for $f'(m_t)$ at the current correlation $m_t$. Obtaining such a bound is challenging without further conditions on $f$. In this paper we handle this by using $f'(m_t)\ge c$ in the generalized linear case, and $f'(m_t)\ge f'(\underline{m}_t)$ in the convex case, where $\underline{m}_t \le m_t$ is known.
Second, achieving the factor $\max_{1/\sqrt{d}\le y \le x} f'(y)^2$ requires a careful tuning of $\sigma_t$ to target the maximizer of $f'$, which in turn relies on knowledge of the current correlation $m_t$. In \cite{rajaraman2024statistical}, this is accomplished by running a separate hypothesis test. However, such an additional testing step is not compatible with the dynamics of SGD.

\appendix

\section{Proofs of main lemmas}

\subsection{Proof of \Cref{cor:complexity}}
By \Cref{thm:learning} and \ref{thm:burn-in}, it remains to show that both the initialization cost $\widetilde{O}((f(1/\sqrt{d})-f(0))^{-2})$ and the burn-in cost $\widetilde{O}(d^2)$ under \Cref{assump:GLB} are dominated by the integral. 

For the initialization cost, we have
\begin{align*}
\frac{1}{(f(\frac{1}{\sqrt{d}}) - f(0))^2} &\stepa{\le} \frac{1}{(f(\frac{1}{\sqrt{d}}) - f(\frac{1}{2\sqrt{d}}))^2} = \frac{4d}{\pth{2\sqrt{d}\int_{1/(2\sqrt{d})}^{1/\sqrt{d}} f'(m) \rmd m }^2} \\
&\stepb{\le} 4d\cdot 2\sqrt{d}\int_{1/(2\sqrt{d})}^{1/\sqrt{d}} \frac{1}{f'(m)^2} \rmd m \le 16d^2 \int_{1/(2\sqrt{d})}^{1/\sqrt{d}} \frac{m}{f'(m)^2} \rmd m, 
\end{align*}
where (a) follows from the monotonicity of $f$, and (b) applies Jensen's inequality. 

For the burn-in cost $\widetilde{O}(d^2)$ under \Cref{assump:GLB}, we simply note that $f'(x)\le \gamma_2$ when $x\in [1-\gamma_0, 1]$ by \Cref{assump:f}, so that
\begin{align*}
d^2\int_{1-\gamma_0}^{1-\gamma_0/4} \frac{m}{f'(m)^2}\rmd m \ge d^2 \cdot \frac{3\gamma_0}{4}\frac{1-\gamma_0}{\gamma_2^2} = \Omega(d^2). 
\end{align*}
These complete the proof. 

\subsection{Proof of \Cref{lemma:drift}}
Observe that
\begin{align*}
    \mathbb{E} [ \theta_{t+1/2} | \mathcal{F}_t ]
    &= \mathbb{E} \left[ \theta_t - \eta_t \sigma_t \left[ (f(\langle a_t,\theta_t \rangle ) - f( \langle a_t,\theta^\star \rangle) - N_t) f' (\langle a_t, \theta_t \rangle) \right] \cdot Z_t  \middle| \mathcal{F}_t \right] \\
    &= \theta_t - \eta_t \sigma_t \mathbb{E} \left[ \left[ (f( \langle a_t,\theta_t \rangle) - f( \langle a_t,\theta^\star \rangle)) f' (\langle a_t, \theta_t \rangle) \right] \cdot Z_t \middle| \mathcal{F}_t \right].
\end{align*}
Recall that $a_t = \sig \theta_t + \sigma_t Z_t$ in \eqref{eq:exploration}. Since $Z_t \perp \theta_t$ almost surely, $\langle a_t,\theta_t\rangle = \sig$. Taking an inner product with $\theta^\star$ on both sides,
\begin{align*}
    \mathbb{E} [ m_{t+1/2} | \mathcal{F}_t ]  - m_t = \eta_t \sigma_t \fsig \cdot \mathbb{E} \left[ f \left( \sig \langle \theta_t,\theta^\star\rangle + \sigma_t \langle Z_t, \theta^\star \rangle \right) \langle Z_t, \theta^\star \rangle \middle| \mathcal{F}_t \right].
\end{align*}
Since $Z_t \sim \operatorname{Unif} ( \{ x \in \mathbb{S}^{d-1} : x \perp \theta_t \})$, the random variable $(1-m_t^2)^{-1/2}\jiao{Z_t,\theta^\star}$ is distributed as the one-dimensional marginal of a uniform random vector on $\mathbb{S}^{d-2}$; denote by $X$ a random variable following this distribution. Consequently, for 
\begin{equation*}
g(x) = f \left( \sig \langle \theta_t,\theta^\star\rangle + \sigma_t \sqrt{1-m_t^2} x \right),
\end{equation*}
an application of the spherical Stein's lemma (cf. \Cref{lemma:spherical-stein}) gives
\begin{align*}
&\mathbb{E} [ m_{t+1/2} | \mathcal{F}_t ]  - m_t \\
&= \eta_t \sigma_t \fsig \sqrt{1-m_t^2} \cdot \mathbb{E} \left[ g \left(X \right) X \middle| \mathcal{F}_t \right] \\
    &= \frac{\eta_t \sigma_t}{d-2} \fsig \sqrt{1-m_t^2} \cdot \mathbb{E} \left[ g' \left(X \right) (1 - X^2) \ \middle| \mathcal{F}_t \right] \\
    &= \frac{\eta_t \sigma_t^2}{d-2} \fsig (1-m_t^2) \cdot \mathbb{E} \left[ f' \left( \sig m_t + \sigma_t\sqrt{1-m_t^2} X \right) (1 - X^2) \ \middle| \mathcal{F}_t \right]. 
\end{align*}
This is the desired identity. For the other inequalities, under \Cref{assump:GLB} and $m_t\ge 0$, for
\begin{align*}
h(x) = f' \left( \sig \langle \theta_t,\theta^\star\rangle + \sigma_t \sqrt{1-m_t^2} x \right) \ge 0,
\end{align*}
we have
\begin{align*}
\bE[h(X)(1-X^2)] &\ge \bE[h(X)(1-X^2)\mathbbm{1}(X\ge 0)] \\
&\ge c_0 \bE[(1-X^2)\mathbbm{1}(X\ge 0)] = c_0\cdot \frac{d-2}{2(d-1)} = \Omega(1)
\end{align*}
for $d\ge 3$. 
Under \Cref{assump:convex} and $m_t\ge 0$, we then write
\begin{align*}
\bE[h(X)(1-X^2)] &\ge \bE[h(X)(1-X^2)\mathbbm{1}(X\ge 0)] \\
&\ge f'(\sig m_t) \cdot \bE[(1-X^2)\mathbbm{1}(X\ge 0)] \\
&= \Omega(f'(\sig m_t)). 
\end{align*}

\subsection{Proof of \Cref{lemma:subexp-improvements}}
By definition,
\begin{align*}
    m_{t+1/2} - m_t = \eta_t \sigma_t  ( f( \langle a_t,\theta^\star \rangle ) + N_t - f( \langle a_t , \theta_t \rangle )) f' ( \langle a_t , \theta_t \rangle ) \cdot \langle Z_t,\theta^\star \rangle
\end{align*}
Define two new random variables: 
\begin{align*}
    \xi^{(1)} &= \eta_t \sigma_t (f( \langle a_t, \theta^\star \rangle )-f(\langle a_t , \theta_t \rangle) ) f' (\langle a_t, \theta_t \rangle) \cdot \langle Z_t, \theta^\star \rangle, \\
    \xi^{(2)} &= \eta_t \sigma_t N_t f' (\langle a_t, \theta_t \rangle) \cdot \langle Z_t, \theta^\star \rangle,
\end{align*}
such that $m_{t+1/2} - m_t = \xi^{(1)} + \xi^{(2)}$. We will show that each of these random variables is subexponential with a bounded $\Psi_1$-Orlicz norm. 

For $\xi^{(1)}$, note that $|f( \langle a_t, \theta^\star \rangle )-f(\langle a_t , \theta_t \rangle)|\le 2\|f\|_\infty$ and $\jiao{a_t,\theta_t} = \sig$. In addition,
\begin{align*}
\jiao{Z_t, \theta^\star} \overset{d}{=} \sqrt{1-m_t^2} X, 
\end{align*}
where $X$ follows the one-dimensional marginal of a uniform random vector on $\mathbb{S}^{d-2}$. By \Cref{lemma:1d-conv-dom}, it holds that $\|X\|_{\Psi_2} \le \|\cN(0,d^{-1})\|_{\Psi_2} = O(d^{-1/2})$. Therefore, 
\begin{align*}
\|\xi^{(1)}\|_{\Psi_1} \stepa{=} O(\|\xi^{(1)}\|_{\Psi_2}) = O\pth{\frac{\eta_t\sigma_t f'(\sig)\sqrt{1-m_t^2}}{\sqrt{d}}}, 
\end{align*}
where (a) follows from \cite[Remark 2.8.8]{vershynin2018high}. 

For $\xi^{(2)}$, note that $\|N_t\|_{\Psi_2}\le 1$ by the $1$-subGaussian assumption on the noise. Therefore, by independence of $Z_t$ and $N_t$, \cite[Lemma 2.8.6]{vershynin2018high} gives
\begin{align*}
\|\xi^{(2)}\|_{\Psi_1} \le \eta_t \sigma_t f'(\sig) \|N_t\|_{\Psi_2} \|\jiao{Z_t, \theta^\star}\|_{\Psi_2} = O\pth{\frac{\eta_t \sigma_t f'(\sig) \sqrt{1-m_t^2}}{\sqrt{d}}}. 
\end{align*}

Finally, the triangle inequality of the $\Psi_1$ norm gives
\begin{align*}
\|m_{t+1/2} - \bE[m_{t+1/2}|\calF_t]\|_{\Psi_1} \le \|\xi^{(1)}\|_{\Psi_1} + \|\xi^{(2)}\|_{\Psi_1} = O\pth{\frac{\eta_t \sigma_t f'(\sig) \sqrt{1-m_t^2}}{\sqrt{d}}}. 
\end{align*}

\subsection{Proof of \Cref{lemma:self-norm-mart-conc-applied}} \label{app:self-norm-mart-conc-applied}

For notational simplicity we write $S_t := S_t^{t_0,\beta}$. By \Cref{lemma:subexp-improvements},
\begin{align*}
\log \bE[\exp(\lambda(S_{t+1}-S_t)) | \calF_t] \le C\beta^{2(t-t_0)}K_t^2\lambda^2, \quad \text{for all } |\lambda|\le \frac{1}{C\beta^{t-t_0}K_t}. 
\end{align*}
Here $C>0$ is a universal constant. We show that $K_t\le 1$ almost surely. In fact, $f'(\sig)\le \gamma_2$ by \Cref{assump:f} when $\sigma_t^2\le \gamma_0$, and
\begin{align*}
K_t \le \Cse \eta_t \gamma_2 \le 1
\end{align*}
by the choice of $\eta_t$. Consequently, for $V_t = C\sum_{s=t_0}^{t-1} \beta^{2(s-t_0)}K_s^2$, $\lambda_{\max}:= \frac{1}{2C}$, and 
\begin{align*}
\psi(\lambda) = \frac{\lambda^2}{1-\lambda/\lambda_{\max}}, \quad \lambda\in [0, \lambda_{\max}), 
\end{align*}
it holds that
\begin{align*}
\bE[\exp(\lambda S_{t+1} - \psi(\lambda)V_{t+1}) | \calF_t] \le \exp(\lambda S_t - \psi(\lambda)V_t), \quad \lambda\in [0, \lambda_{\max}), \beta^{t-t_0}\le 2. 
\end{align*}
Therefore, the conditions of \Cref{theorem:self-norm-mart-conc} are fulfilled, and the claimed upper tail of $S_t$ follows from choosing $\omega=1$. Replacing $S_t$ by $-S_t$ in the above analysis gives the lower tail of $S_t$. 

\subsection{Proof of \Cref{lemma:norm-error-ub}}
Since $\theta_t\perp Z_t$, the iterate $\theta_{t+1/2}$ in \eqref{eq:SGD-decision-making} satisfies 
\begin{equation*}
    \| \theta_{t+1/2} \|^2 = 1 + \eta_t^2 \sigma_t^2 f'(\sig)^2 (f(\jiao{\theta_t, a_t})-r_t)^2.
\end{equation*}
Therefore, $\|\theta_{t+1/2}\|\ge 1$, it is clear that
\begin{align*}
m_{t+1} = \frac{m_{t+1/2}}{\|\theta_{t+1/2}\|} &= m_{t+1/2} - \frac{m_{t+1/2}}{\|\theta_{t+1/2}\|}\pth{\|\theta_{t+1/2}\|-1} \\
&\ge m_{t+1/2} - \frac{1}{2}\eta_t^2 \sigma_t^2 f'(\sig)^2 (f(\jiao{\theta_t, a_t})-r_t)^2, 
\end{align*}
using $\sqrt{1+x}-1\le \frac{x}{2}$ for $x\ge 0$, and $|m_{t+1/2}| / \|\theta_{t+1/2}\|\le 1$. The first statement now follows from the sub-Gaussian concentration of $r_t$, which implies $(f(\jiao{\theta_t, a_t})-r_t)^2 = O(\log(1/\delta))$ with probability at least $1-\delta$.

For the second statement, $m_{t+1}\le m_{t+1/2}$ follows from $\|\theta_{t+1/2}\|\ge 1$. The other direction follows from the same high-probability upper bound of $\|\theta_{t+1/2}\| - 1$, and the simple inequality $\frac{1}{1+x}\ge 1-x$ for $x\ge 0$.

\subsection{Proof of \Cref{lemma:local-improvement-learning-RM}}

As we showed in the proof of \Cref{lemma:local-improvement-learning-PE}, we will show by induction on $s$ that with probability at least $1-\Delta \delta/3$, $m_s\ge m_t - \frac{\varepsilon}{4}$ for all $s\in [t,t+\Delta]$. The base case $s=t$ is ensured by the assumption $m_t\ge 1-\varepsilon$. For the inductive step, the induction hypothesis implies that $m_t, \dots, m_{s-1}\ge 1-2\varepsilon$. Then
\begin{align*}
m_s - m_t = \sum_{r=t}^{s-1} \bqth{\underbrace{\pth{ \bE[m_{r+1/2}|\calF_r] - m_r }}_{\ge 0 \text{ by \Cref{lemma:drift}}} + \underbrace{\pth{m_{r+1/2} - \bE[m_{r+1/2}|\calF_r]}}_{=: A_r} + \underbrace{\pth{m_{r+1} - m_{r+1/2}}}_{=:B_r} }.
\end{align*}
Thanks to the inductive hypothesis, $K_r = O(\eta\frac{\varepsilon}{\sqrt{d}})$ for all $r\in [t,s-1]$ in \Cref{lemma:subexp-improvements}, so \Cref{lemma:self-norm-mart-conc-applied} (with $t_0=t, \beta=1$) gives $$|\sum_{r=t}^{s-1} A_r| = O(\eta\sqrt{\frac{\Delta \varepsilon^2}{d}}\log(\frac{\Delta}{\delta})) = O(\sqrt{\eta\varepsilon}\log(\frac{\Delta}{\delta})) < \frac{\varepsilon}{8}$$ with probability $1-\frac{\delta}{6}$, by choosing $c>0$ small enough. Similarly, $$\sum_{r=t}^{s-1} |B_r| = O(\Delta \eta^2 \varepsilon \log(\frac{\Delta}{\delta})) = O(d\eta \log(\frac{\Delta}{\delta})) < \frac{\varepsilon}{8}$$ with probability $1-\frac{\delta}{6}$, by \Cref{lemma:norm-error-ub} and choosing $c>0$ small enough. This implies that $m_s\ge m_t - \frac{\varepsilon}{4}$ with probability $1-\frac{\delta}{3}$, completing the induction. 

Conditioned on the event $m_s\ge 1-2\varepsilon$ for all $s\in [t,t+\Delta]$, we distinguish into two regimes in this epoch. Let $T_0\ge t$ be the stopping time where $m_s > 1-\varepsilon/4$ for the first time. 

\paragraph{Regime I: $t\le s< T_0$.} In this regime $m_s\in [1-2\varepsilon, 1-\varepsilon/4]$. We show that $T_0\le t+ \Delta$ with probability $1-\Delta \delta/3$. If $T_0>t+\Delta$, using the same high-probability bounds, we have
\begin{align*}
m_{t+\Delta} - m_t \ge \sum_{s=t}^{t+\Delta-1} \pth{\bE[m_{s+1/2}|\calF_s] - m_s} - \frac{\varepsilon}{4}
\end{align*}
with probability $1-\Delta \delta/3$. By \Cref{lemma:drift} with $1-m_s^2=\Omega(\varepsilon)$ and $\sqrt{1-\sigma_s^2}m_s \ge 1-\gamma_0$ for $s<T_0$, the total drift is $\Omega(\frac{\Delta \eta \varepsilon^2}{d}) = \Omega(C\varepsilon)$. Therefore, for a large absolute constant $C>0$, we would have $m_{t+\Delta}\ge 1-\varepsilon/2$, a contradiction to the assumption $T_0> t + \Delta$. 
 
\paragraph{Regime II: $s\ge T_0$.} As shown above, this regime is non-empty with high probability. The same induction starting from $s=T_0$ shows that, with probability $1-\Delta \delta/3$, $m_s\ge m_{T_0}-\varepsilon/4$ holds for all $s\in [T_0, t+\Delta]$. In particular, choosing $s=t+\Delta$ gives the desired result $m_{t+\Delta}\ge 1-\varepsilon/2$. 

Finally, to lower bound $\jiao{\theta^\star, a_s}$ during this epoch, we simply note that
\begin{align*}
\jiao{\theta^\star, a_s} &= \sqrt{1-\sigma_s^2} m_s + \sigma_s \jiao{\theta^\star, Z_s} \\
&= \sqrt{1-\sigma_s^2} m_s + \sigma_s \jiao{\theta^\star - m_s\theta_s, Z_s} \\
&\ge \sqrt{1-\sigma_s^2} m_s - \sigma_s \|\theta^\star - m_s\theta_s\| \\
&= \sqrt{1-\sigma_s^2} m_s - \sigma_s\sqrt{1-m_s^2}. 
\end{align*}
Under the good event $m_s\ge m_t - \frac{\varepsilon}{4}\ge 1-\frac{3\varepsilon}{2}$, by $\sigma_s \equiv \sqrt{\varepsilon}$ we have $\jiao{\theta^\star, a_s}\ge 1-4\varepsilon$, as desired. 

\subsection{Proof of \Cref{lemma:local-improvement-learning-cvx}}
Let
\begin{align}\label{eq:beta}
\beta := 1 - \Cn \gamma_2^2\eta^2\sigma^2\log\pth{\frac{4\Delta}{\delta}},      
\end{align}
with $\Cn$ given in \Cref{lemma:norm-error-ub}. By the choice of $\eta$, when the constant $c>0$ is small enough, we have $\beta\in (1/2,1)$. In addition, let
\begin{align}\label{eq:stopping_time}
T_0 = \min\bsth{ s\ge t: m_s\ge \pth{1-\frac{\gamma_0}{8}} \sqrt{\frac{k+1}{d}} }
\end{align}
be the stopping time when the correlation $m_s$ first hits a given threshold. Unlike the other proofs, the event $T_0 \le t+\Delta$ no longer occurs with high probability, and our proof will discuss both cases. 

\paragraph{Case I: $T_0 > t+\Delta$.} Define the following event:
\begin{align}\label{eq:calE_s}
\calE_s := \sth{ m_s \ge \overline{m}_k - \frac{\gamma_0}{d} + \frac{c'\eta f'(\underline{m}_k)}{d}(s-t)},   
\end{align}
where $c'>0$ is a small absolute constant (to be chosen later) independent of $c$. We will prove by induction that
\begin{align}\label{eq:induction}
\bP\pth{ \pth{\cup_{r=t}^s \calE_r^c} \cap \sth{ T_0> t+\Delta }  } \le (s-t)\frac{\delta}{2}, \quad \text{for all }s=t,t+1,\dots,t+\Delta. 
\end{align}
The base case follows from the assumption $m_t\ge \overline{m}_k$, so that $\bP(\calE_t^c) = 0$. For the inductive step, suppose that \eqref{eq:induction} holds for $s-1$. Since $\bP(A\cup B)=\bP(A)+\bP(A^c\cap B)$, it suffices to prove that
\begin{align}\label{eq:induction-step}
\bP\pth{\calE_s^c \cap \pth{\cap_{r=t}^{s-1}\calE_r} \cap \sth{T_0 > t + \Delta}} \le \frac{\delta}{2}. 
\end{align}
To this end, we introduce some additional events. First, applying \Cref{lemma:self-norm-mart-conc-applied} with $t_0=t$ and $\beta^{-1}\le 2$ in \eqref{eq:beta} gives
\begin{align}\label{eq:event_1}
   \bP(\calE_{s,1}) := \bP\pth{ \left|\sum_{r=t}^{s}  \frac{m_{r+1/2}-\bE[m_{r+1/2}|\calF_r]}{\beta^{r-t}}\right| \le C\eta \sqrt{\frac{\Delta}{d}}\log\pth{\frac{d}{\delta}} } \ge 1- \frac{\delta}{4\Delta}, 
\end{align}
for some absolute constant $C>0$. To see \eqref{eq:event_1}, note that
\begin{align}\label{eq:beta_scale}
\beta^{-\Delta} = \exp\pth{O((1-\beta)\Delta)} = \exp\pth{O\pth{ \eta^2 \Delta\log\frac{d}{\delta} }} = \exp\pth{O\pth{ \frac{cC}{\iota d} }} = 1+\frac{o_c(1)}{d}, 
\end{align}
so that the condition $\beta^{-\Delta}\le 2$ holds for small $c>0$, and $\sum_{r=t}^s \beta^{-2(r-t)} = O(s-t+1) = O(\Delta)$. In addition, let $\calE_{s,2}$ be the good event that the lower bound in \Cref{lemma:norm-error-ub} holds for $m_{s+1}$, with $\delta/(4\Delta)$ in place of $\delta$. 

Note that $\calE_r \cap \calE_{r,2} \cap \sth{T_0 > t + \Delta}$ implies that 
\begin{align*}
m_{r+1} &\ge \beta m_{r+1/2} \\
&= \beta \pth{m_{r+1/2} - \bE[m_{r+1/2}|\calF_t] + \bE[m_{r+1/2}|\calF_t] - m_r + m_r} \\
&\ge \beta \pth{m_{r+1/2} - \bE[m_{r+1/2}|\calF_t] + c_1\frac{\eta f'(\underline{m}_k)}{d} + m_r}, 
\end{align*}
where $c_1>0$ is an absolute constant, and the last step invokes \Cref{lemma:drift}, uses $m_r\le 1-\Omega(1)$ since $r\le t+\Delta < T_0$, and 
\begin{align*}
\sqrt{1-\sigma^2} m_r \ge \sqrt{1-\gamma_0}\pth{ \overline{m}_k-\frac{\gamma_0}{d}} \ge (1-\gamma_0)^2 \sqrt{\frac{k}{d}} = \underline{m}_k
\end{align*}
by \eqref{eq:calE_s} and the definitions of $\overline{m}_k, \underline{m}_k$. Summing over $r=t,\dots,s-1$, the event $\cap_{r=t}^{s-1}(\calE_r\cap \calE_{r,2})\cap \sth{T_0>t+\Delta}$ implies that
\begin{align*}
m_s \ge \beta^{s-t} \pth{m_t + \sum_{r=t}^{s-1} \frac{m_{r+1/2} - \bE[m_{r+1/2}|\calF_t]}{\beta^{r-t}} } + c_1\frac{\eta f'(\underline{m}_k)}{d} \sum_{r=t}^{s-1} \beta^{r+1-t}.
\end{align*}
In view of \eqref{eq:event_1} and \eqref{eq:beta_scale}, a further intersection with $\calE_{s-1}$ implies that
\begin{align*}
m_s &\ge \pth{1-\frac{o_c(1)}{d}}\overline{m}_k - C\eta \sqrt{\frac{\Delta}{d}}\log\pth{\frac{d}{\delta}} + \frac{c'\eta f'(\underline{m}_k)}{d}(s-t) \\
&= \pth{1-\frac{o_c(1)}{d}}\overline{m}_k - O\pth{\frac{cC}{d}} + \frac{c'\eta f'(\underline{m}_k)}{d}(s-t) \\
&\ge \overline{m}_k - \frac{\gamma_0}{d} + \frac{c'\eta f'(\underline{m}_k)}{d}(s-t)
\end{align*}
for $c>0$ small enough; this is precisely the event $\calE_s$. In other words, we have shown that
\begin{align}\label{eq:emptyset}
\calE_s^c \cap \pth{\cap_{r=t}^{s-1} (\calE_r\cap \calE_{r,1}\cap \calE_{r,2})} \cap \sth{T_0 > t + \Delta} = \varnothing. 
\end{align}
By \eqref{eq:emptyset}, we have
\begin{align*}
&\bP\pth{\calE_s^c \cap \pth{\cap_{r=t}^{s-1}\calE_r} \cap \sth{T_0 > t + \Delta}} \\
&\le \bP(\cup_{r=t}^{s-1} \calE_{r,1}^c) + \bP\pth{\pth{\cup_{r=t}^{s-1} \calE_{r,2}^c} \cap \pth{\cap_{r=t}^{s-1} (\calE_r\cap \calE_{r,1})} \cap \sth{T_0 > t + \Delta}  }. 
\end{align*}
By \eqref{eq:event_1} and the union bound, the first probability is at most $\frac{\delta}{4}$. For the second probability, the same program above shows that $(\cap_{i=t}^{r-1} \calE_{i,2}) \cap \pth{\cap_{i=t}^r (\calE_r\cap \calE_{r,1})}\cap \sth{T_0>t+\Delta}$ implies $m_{r+1/2}\ge 0$, which is the prerequisite of \Cref{lemma:norm-error-ub}. Therefore, the conditional probability of $\calE_{r,2}$ is at least $1-\frac{\delta}{4\Delta}$, and by a union bound the second probability is at most $\frac{\delta}{4}$. This proves \eqref{eq:induction-step} and completes the induction. 

Finally, note that $\calE_{t+\Delta}$ implies that 
\begin{align*}
m_{t+\Delta}&\ge \overline{m}_k - \frac{\gamma_0}{d} + \frac{c'\eta f'(\underline{m}_k)}{d}\Delta \\
&= \overline{m}_k - \frac{\gamma_0}{d} + c'C(\underline{m}_{k+1} - \underline{m}_k) \ge \overline{m}_{k+1}, 
\end{align*}
by choosing $C>0$ large enough. Therefore, \eqref{eq:induction} with $s=t+\Delta$ implies that
\begin{align}\label{eq:conclusion-1}
\bP(\sth{ m_{t+\Delta} < \overline{m}_{k+1} } \cap \sth{T_0 > t + \Delta}) \le \frac{\Delta \delta}{2}. 
\end{align}

\paragraph{Case II: $T_0 \le t + \Delta$.} We apply our usual program to this case: if $T_0\le t+\Delta$, then
\begin{align*}
m_{t+\Delta} - m_{T_0} = \sum_{s=T_0}^{t+\Delta-1} \bqth{\underbrace{\pth{ \bE[m_{s+1/2}|\calF_s] - m_t }}_{\ge 0 \text{ by \Cref{lemma:drift}}} + \underbrace{\pth{m_{s+1/2} - \bE[m_{s+1/2}|\calF_s]}}_{=: A_s} + \underbrace{\pth{m_{s+1} - m_{s+1/2}}}_{=:B_s} }. 
\end{align*}
By \Cref{lemma:self-norm-mart-conc-applied}, with probability at least $1-\frac{\Delta\delta}{4}$, 
\begin{align*}
\left| \sum_{s=T_0}^{t+\Delta-1} A_s \right| = O\bpth{\eta\sqrt{\frac{\Delta}{d}}\log\bpth{\frac{d}{\delta}}} = O\pth{\frac{cC}{d}}. 
\end{align*}
By \Cref{lemma:norm-error-ub}, with probability at least $1-\frac{\Delta\delta}{4}$, 
\begin{align*}
\sum_{s=T_0}^{t+\Delta-1} |B_s| = O\bpth{\Delta\cdot \eta^2\log\bpth{\frac{d}{\delta}}} = O\pth{\frac{cC}{d}}. 
\end{align*}
Therefore, conditioned on $T_0\le t+ \Delta$, with probability at least $1-\frac{\Delta\delta}{2}$, 
\begin{align*}
m_{t+\Delta} \ge \pth{1-\frac{\gamma_0}{8}}\sqrt{\frac{k}{d}} - O\pth{\frac{cC}{d}} \ge \pth{1-\frac{\gamma_0}{4}}\sqrt{\frac{k}{d}} = \overline{m}_k
\end{align*}
for a small enough constant $c>0$. In other words, 
\begin{align}\label{eq:conclusion-2}
\bP(\sth{ m_{t+\Delta} < \overline{m}_{k+1} } \cap \sth{T_0 \le t + \Delta}) \le \frac{\Delta \delta}{2}. 
\end{align}

Finally, a combination of \eqref{eq:conclusion-1} and \eqref{eq:conclusion-2} gives $\bP(m_{t+\Delta} < \overline{m}_{k+1}) \le \Delta\delta$, which is the desired result.

\subsection{Proof of \cref{prop:counterexample}}
Let $T_0$ be the first time $t\ge 1$ such that $m_t \ge 0.1$. If $T_0 > T$, the target claim $\max_{t\in [T]} m_t \le 0.2$ is clearly true. Hence in the sequel we condition on the event $T_0 \le T$. In addition, by Gaussian tail bounds, we have $\max_{t\in [T]} |r_t| = O(\sqrt{\log(T/\delta)})$ with probability at least $1-\delta/4$. By \eqref{eq:SGD-decision-making}, we then have a deterministic inequality 
\begin{align*}
    m_{T_0-1/2} \le m_{T_0-1} + C\eta_{T_0-1}\sqrt{\log(T/\delta)} \le m_{T_0-1} + 0.05 \le 0.15, 
\end{align*}
by assumption of $\eta_t \le \frac{c}{\log(T/\delta)}$ for a sufficiently small constant $c>0$, and the definition of $T_0$ that $m_{T_0-1}\le 0.1$. By \Cref{lemma:norm-error-ub}, this implies that $m_{T_0}\le 0.15$. 

In the sequel, we start from $m_{T_0} \in [0.1, 0.15]$, and for notational simplicity we redefine $m_{T_0}$ to be our starting point, i.e. $T_0 = 1$. Next we consider the time interval $[1, T_1]$ with
\begin{align*}
T_1 = \min\bsth{t\ge 1: \sum_{s\le t} \frac{\eta_s^2\sigma_s^2}{d} \ge \frac{c_1}{\log^2(T/\delta)} },
\end{align*}
for some absolute constant $c_1>0$ to be chosen later. We prove the following claims. 

\paragraph{Claim I: $\max_{t\in [T_1]} m_t \le 0.2$ with probability at least $1-\delta T_1/(4T)$.} To prove this claim, we first show that when $m_t\le 0.2$, then
\begin{align}\label{eq:negative_drift}
\bE[m_{t+1/2} | \calF_t] \le m_t. 
\end{align}
Indeed, by \Cref{lemma:drift}, 
\begin{align*}
    &\mathbb{E} [ m_{t+1/2} | \mathcal{F}_t ] - m_t \\
    &= \frac{\eta_t \sigma_t^2}{d-2}\fsig(1-m_t^2) \cdot \mathbb{E} \left[ f' \left( \sig m_t + \sigma_t \sqrt{1-m_t^2} X \right)(1-X^2) \middle| \mathcal{F}_t \right]. 
\end{align*}
Since $\sigma_t\le 0.1, m_t\le 0.2$, and $|X|\le 1$ almost surely, we have
\begin{align*}
\sig m_t + \sigma_t \sqrt{1-m_t^2} X \le m_t + \sigma_t \le 0.3 < \frac{1}{3}. 
\end{align*}
Since $f'(m)\le 0$ for all $m\le 1/3$ in our construction, and $f'(\sig) > 0$, we obtain \eqref{eq:negative_drift}. 

Next, without loss of generality we assume that $m_t\ge 0$ for all $t\in [T_1]$, since a negative $m_t$ only makes the target claim simpler. For every $t\in [T_1]$,
\begin{align*}
m_t - m_1 = \sum_{r=1}^{t-1} \bqth{\underbrace{\pth{ \bE[m_{r+1/2}|\calF_r] - m_r }}_{\le 0 \text{ by \eqref{eq:negative_drift}}} + \underbrace{\pth{m_{r+1/2} - \bE[m_{r+1/2}|\calF_r]}}_{=: A_r} + \underbrace{\pth{m_{r+1} - m_{r+1/2}}}_{\le 0\text{ by \Cref{lemma:norm-error-ub}}} }. 
\end{align*}
By \Cref{lemma:self-norm-mart-conc-applied} with $\beta=1$, we get
\begin{align*}
\Big| \sum_{r=1}^{t-1} A_r \Big| \le C\log\pth{\frac{T}{\delta}}\sqrt{\sum_{r=1}^{t-1} \frac{\sigma_r^2 \eta_r^2}{d}}
\end{align*}
with probability at least $1-\delta/(4T)$, for some absolute constant $C>0$. By the definition of $T_1$, we obtain $|\sum_{r=1}^{t-1} A_r| \le 0.05$ for a sufficiently small $c_1>0$. Therefore, $m_t\le m_1 + 0.05 \le 0.2$ with probability at least $1-\delta/(4T)$, and an induction on $t$ with a union bound gives the target claim. 

\paragraph{Claim II: $\min_{t\in [T_1]} m_t\le 0.1$ with probability at least $1-\delta T_1/(4T)$.} In the sequel, we condition on the good event in Claim I. Let $T_2$ be the first time $t\ge 1$ such that $m_t\le 0.1$; note that it is possible to have $T_2 > T_1$ or even $T_2 = \infty$. We first show that if $m_t\ge 0.1$, then
\begin{align}\label{eq:negative_drift_2}
\bE[m_{t+1/2}|\calF_t] - m_t \le -\frac{c_2\eta_t\sigma_t^2}{d} 
\end{align}
for some absolute constant $c_2>0$. Indeed, for $\sigma_t\le 0.1, m_t\in [0.1, 0.2]$, and $|X|\le 1$, we have
\begin{align*}
0\le \sqrt{0.99} m_t - \sqrt{0.99} \sigma_t \le \sig m_t + \sigma_t \sqrt{1-m_t^2} X \le m_t + \sigma_t < \frac{1}{3}. 
\end{align*}
Since $f'(m)=-1$ for all $m\in [0,1/3]$ in our construction, \eqref{eq:negative_drift_2} follows from \Cref{lemma:drift}. 

Next, for every $t\le \min\{T_2, T_1\}$, we write
\begin{align*}
m_t - m_1 = \sum_{r=1}^{t-1} \bqth{\underbrace{\pth{ \bE[m_{r+1/2}|\calF_r] - m_r }}_{\le -\frac{c_2\eta_t\sigma_t^2}{d} \text{ by \eqref{eq:negative_drift_2}}} + \underbrace{\pth{m_{r+1/2} - \bE[m_{r+1/2}|\calF_r]}}_{=: A_r} + \underbrace{\pth{m_{r+1} - m_{r+1/2}}}_{\le 0\text{ by \Cref{lemma:norm-error-ub}}} }. 
\end{align*}
Similar to Claim I, we have $|\sum_{r=1}^{t-1} A_r| \le 0.05$ with probability at least $1-\delta/(4T)$. On the other hand, the total drift is
\begin{align*}
\sum_{r=1}^{T_1-1} \pth{ \bE[m_{r+1/2}|\calF_r] - m_r } \le - \frac{c_2}{d}\sum_{r=1}^{T_1-1} \eta_t \sigma_t^2 \stepa{\le} - \frac{c_2\log^2(T/\delta)}{cd}\sum_{r=1}^{T_1-1} \eta_t^2 \sigma_t^2 \stepb{\le} -\frac{c_1c_2}{2c}, 
\end{align*}
where (a) uses the upper bound of $\eta_t$, and (b) uses the definition of $T_1$. By choosing $c>0$ small enough, the total drift can be made smaller than $-0.1$, so that if $T_2 > T_1$, then $m_{T_1} \le m_1 - 0.1 + 0.05 \le 0.1$, which in turn means that $T_2\le T_1$, a contradiction. Therefore, with probability at least $1-\delta T_1/(4T)$, we have $T_2\le T_1$, or equivalently $\min_{t\in [T_1]} m_t\le 0.1$. 

Finally, it is clear that a repeated application of Claim I and II implies \Cref{prop:counterexample}: starting from the first time $T_0$ with $m_{T_0}\ge 0.1$, the above claims show that with high probability, future alignment $m_t$ will fall below $0.1$ before it rises above $0.2$. Once $m_t$ falls below $0.1$, we repeat the entire process again and wait for the next time it falls below $0.1$. Since the failure probability at each step of the analysis is at most $\delta/T$, a union bound gives the total failure probability of $\delta$. 

\section{Auxiliary results}

Below we state a self-normalized concentration inequality for martingales \cite[Theorem 3.1]{whitehouse2023time} adapted to our setting.

\begin{definition}[CGF-like function] \label{def:cgf-like}
A function $\psi : [0, \lambda_{\max}) \to \mathbb{R}_{\ge 0}$ is said to be CGF-like if it is $(a)$ twice continuously-differentiable on its domain, $(b)$ strictly convex, $(c)$ satisfies $\psi (0) = \psi' (0) = 0$, and $(d)$ $\psi'' (0) > 0$.
\end{definition}

\begin{definition}[Sub-$\psi$] \label{def:sub-psi}
Let $\psi: \left[0, \lambda_{\max} \right) \rightarrow \mathbb{R}_{\ge 0}$ be a CGF-like function. Let $\{ S_t \}_{t \ge 0}$ and $\{ V_t \}_{t \ge 0}$ be respectively $\mathbb{R}$-valued and $\mathbb{R}_{\geq 0}$-valued processes adapted to some filtration $\{ \mathcal{F}_t \}_{t \geq 0}$. We say that $\{ S_t, V_t\}_{t \geq 0}$ is sub-$\psi$ if for every $\lambda \in \left[0, \lambda_{\max} \right)$, 
\begin{equation*}
    M_t^\lambda:=\exp \left( \lambda S_t-\psi(\lambda) V_t\right) \leq L_t^\lambda, 
\end{equation*}
where $\{L_t^\lambda\}_{t\ge 0}$ is a non-negative supermartingale adapted to $\{\calF_t\}_{t\ge 0}$. 
\end{definition}

The following result is a corollary of \cite[Theorem 3.1]{whitehouse2023time} with the choice $h(k) = (1+k)^2$ for $k \ge 1$.
\begin{lemma}[Self-normalized concentration inequality] \label{theorem:self-norm-mart-conc}
Suppose $\left\{ S_t, V_t \right\}_{t \geq 0}$ is a real-valued sub-$\psi$ process for $\psi : \left[0, \lambda_{\max }\right) \rightarrow \mathbb{R}_{\ge 0}$ satisfying
\begin{align*}
    \psi (\lambda) = \frac{\lambda^2}{ 1 - \lambda / \lambda_{\max} }
\end{align*}
on its domain. Let $\delta \in (0,1)$ denote the error probability. Define the function $\ell : \mathbb{R}_{\geq 0} \rightarrow \mathbb{R}_{\geq 0}$ by
\begin{align*}
    \ell_\omega (v) = 2\log \left(1 + \log \left( v\omega \vee 1\right)\right)+\log \left(\frac{1}{\delta}\right),
\end{align*}
then there exists a universal constant $C > 0$ such that,
\begin{align*}
    \Pr \left( \exists t \ge 1 : S_t \ge C \left( \sqrt{ \left(V_t \vee \omega^{-1} \right) \ell_\omega \left(V_t\right)} + \lambda_{\max}^{-1} \ell_\omega \left(V_t\right) \right) \right) \le \delta. 
\end{align*}
\end{lemma}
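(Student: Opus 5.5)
The plan is a standard ``stitching'' (peeling) argument: apply Ville's inequality to the nonnegative supermartingales $L_t^\lambda$ for a countable family of $\lambda$'s, each tuned to a geometric range of the intrinsic time $V_t$, and take a union bound with error budget allocated by $h(k)=(1+k)^2$.

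\emph{Step 1 (fixed $\lambda$).} For each fixed $\lambda\in[0,\lambda_{\max})$, the sub-$\psi$ property gives $M_t^\lambda\le L_t^\lambda$. Ville's maximal inequality for the nonnegative supermartingale $L_t^\lambda$ (normalized so that $L_0^\lambda\le 1$, as is implicit in the definition) then yields
\begin{align*}
\bP\pth{\exists t: \lambda S_t-\psi(\lambda)V_t\ge \log(1/\delta')}\le \delta'.
\end{align*}
On the complement, $S_t<\log(1/\delta')/\lambda+\psi(\lambda)V_t/\lambda$ for all $t$.

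\emph{Step 2 (epochs and allocation).} Partition $[0,\infty)$ into epochs. Let $I_0=[0,e/\omega)$, which contains the regime $V_t<\omega^{-1}$ where $V_t\vee\omega^{-1}=\omega^{-1}$. For $k\ge1$, let $I_k=[e^k/\omega,e^{k+1}/\omega)$, and set $v_k:=e^{k+1}/\omega$. Assign the error level $\delta_k:=\delta/(1+k)^2$; since $\sum_{k\ge0}(1+k)^{-2}=\pi^2/6$, the union bound gives total error at most $2\delta$, and the factor $2$ is absorbed into $C$. Choose
\begin{align*}
\lambda_k:=\min\bsth{\frac{\lambda_{\max}}{2},\ \sqrt{\frac{\log(1/\delta_k)}{v_k}}}.
\end{align*}
Because $\lambda_k\le\lambda_{\max}/2$, we have $\psi(\lambda_k)\le 2\lambda_k^2$. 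Hence on the good event of Step 1, for every $t$ with $V_t\in I_k$,
\begin{align*}
S_t< \frac{\log(1/\delta_k)}{\lambda_k}+2\lambda_k v_k .
\end{align*}

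\emph{Step 3 (evaluating the bound).} There are two cases according to which term attains the minimum in $\lambda_k$.
\begin{itemize}
\item If $\lambda_k=\sqrt{\log(1/\delta_k)/v_k}$, the bound becomes $3\sqrt{v_k\log(1/\delta_k)}$.
\item If $\lambda_k=\lambda_{\max}/2$, then $\lambda_{\max}/2\le\sqrt{\log(1/\delta_k)/v_k}$, so $2\lambda_k v_k\le \sqrt{v_k\log(1/\delta_k)}$, and the bound is at most $\sqrt{v_k\log(1/\delta_k)}+2\lambda_{\max}^{-1}\log(1/\delta_k)$.
\end{itemize}
It remains to compare these with the claimed threshold. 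On $I_k$ we have $v_k\le e(V_t\vee\omega^{-1})$. For $k\ge1$, $k\le\log(V_t\omega)$, so
\begin{align*}
\log(1/\delta_k)=\log(1/\delta)+2\log(1+k)\le \ell_\omega(V_t).
\end{align*}
For $k=0$, $\log(1/\delta_0)=\log(1/\delta)\le\ell_\omega(V_t)$ trivially. Substituting these gives
\begin{align*}
S_t< C\pth{\sqrt{(V_t\vee\omega^{-1})\,\ell_\omega(V_t)}+\lambda_{\max}^{-1}\ell_\omega(V_t)}
\end{align*}
simultaneously for all $t$, outside an event of probability at most $2\delta$. Rescaling $\delta$ and $C$ finishes the proof.

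\emph{Main obstacle.} The delicate point is the bookkeeping at the edges. First, the cap $\lambda\le\lambda_{\max}/2$, needed to keep $\psi(\lambda)\le 2\lambda^2$, is exactly what produces the additive $\lambda_{\max}^{-1}\ell_\omega$ term; the case analysis in Step 3 must show that this term absorbs everything that the square-root term does not. Second, the bounds must be uniform in $t$ even though $V_t$ is random and may move between epochs. This is handled because Ville's inequality controls each fixed $\lambda_k$ uniformly over all $t$; we then simply read off, at each $t$, the inequality for whichever epoch $V_t$ currently lies in. No stopping-time argument is required beyond the union bound over $k$. The remaining check is that $\log(1+k)$ with $k\approx\log(V_t\omega)$ matches the $2\log(1+\log(V_t\omega\vee1))$ term in $\ell_\omega$, which is precisely why the choice $h(k)=(1+k)^2$ is made.
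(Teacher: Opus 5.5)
Your proof is correct in substance, but it takes a more self-contained route than the paper. The paper does no stitching of its own. It computes $(\psi^\star)^{-1}(u)=2\sqrt{u}+\lambda_{\max}^{-1}u$ and plugs this into the general stitched boundary $(V_t\vee\omega^{-1})\,(\psi^\star)^{-1}\big(\alpha\,\ell_\omega(V_t)/(V_t\vee\omega^{-1})\big)$ of Whitehouse et al.\ (Theorem 3.1) with $h(k)=(1+k)^2$. That immediately gives $2\sqrt{\alpha(V_t\vee\omega^{-1})\ell_\omega(V_t)}+\alpha\lambda_{\max}^{-1}\ell_\omega(V_t)$.

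You instead carry out by hand the stitching argument behind that theorem:
\begin{itemize}
\item Ville's inequality on a deterministic grid $\lambda_k$;
\item a union bound with weights $(1+k)^{-2}$;
\item the cap $\lambda_k\le\lambda_{\max}/2$, so that $\psi(\lambda_k)\le 2\lambda_k^2$, which replaces the exact conjugate computation with a two-case analysis.
\end{itemize}
The citation route is shorter and uses the exactly optimized $\lambda$, so its constants are slightly sharper. Yours is elementary and shows plainly where the additive $\lambda_{\max}^{-1}\ell_\omega$ term comes from. You also correctly flag the normalization $\mathbb{E}[L_0^\lambda]\le 1$, which the paper's definition of sub-$\psi$ leaves implicit. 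One small slip: in your second case, $2\lambda_k v_k\le 2\sqrt{v_k\log(1/\delta_k)}$, not $\sqrt{v_k\log(1/\delta_k)}$. This only changes constants.

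One step needs tightening. Your union bound gives failure probability $(\pi^2/6)\delta$, and absorbing that factor into $C$ by rescaling $\delta$ only works when $\delta$ is bounded away from $1$. Replacing $\delta$ by $6\delta/\pi^2$ adds the constant $\log(\pi^2/6)$ to $\ell_\omega(V_t)$. On $\{V_t\le\omega^{-1}\}$ you have $\ell_\omega(V_t)=\log(1/\delta)\to 0$ as $\delta\to 1$, so that additive constant cannot be dominated by $C\,\ell_\omega(V_t)$. For $\delta\le 1/2$ it is fine, since then $\ell_\omega(V_t)\ge\log 2>\log(\pi^2/6)$ and hence $\ell_\omega(V_t)+\log(\pi^2/6)\le 2\,\ell_\omega(V_t)$. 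That range covers every use in the paper.

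The same bookkeeping is implicit in the paper's choice $h(k)=(1+k)^2$: its $\ell_\omega$ reduces to $\log(1/\delta)$ on $\{V_t\le\omega^{-1}\}$, so effectively $h(0)=1$ and $\sum_{k\ge0}1/h(k)=\pi^2/6>1$. This is therefore a shared looseness, not a defect of your route. Still, you should state the restriction (e.g.\ $\delta\le 1/2$) explicitly rather than claim the bound for all $\delta\in(0,1)$.
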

\begin{proof}
By simple algebra, the convex conjugate $\psi^\star$ of $\psi$ satisfies $(\psi^\star)^{-1} (u) = 2 \sqrt{u} + \lambda_{\max}^{-1} u$. The rest follows from \cite[Theorem 3.1]{whitehouse2023time}. 
\end{proof}

\begin{lemma}[Spherical Stein's Lemma] \label{lemma:spherical-stein}
Suppose $Z \sim \operatorname{Unif} (\mathbb{S}^{d-1})$ and consider a fixed $\alpha \in \mathbb{R}^d$ and let $X = \langle \alpha, Z \rangle$. For any bounded function $f$,
\begin{equation*}
    \mathbb{E} [ X f (X)] = \frac{1}{d-1} \mathbb{E} \left[ f'(X) (1-X^2) \right].
\end{equation*}
\end{lemma}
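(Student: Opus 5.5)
The plan is to prove the spherical Stein identity by reducing to a one-dimensional integration by parts against the explicit marginal density of $X$. First we may assume $\|\alpha\|=1$: this is the case used in the proof of \Cref{lemma:drift}, and for general $\alpha$ one would replace $1-X^2$ by $\|\alpha\|^2-X^2$. By rotation invariance of $\Unif(\bS^{d-1})$, $X=\jiao{\alpha,Z}$ has the same law as the first coordinate $Z_1$. The first step is to recall that for $d\ge 3$ this marginal has density
\begin{align*}
p_d(x) = c_d\,(1-x^2)^{(d-3)/2}, \qquad x\in[-1,1],
\end{align*}
with $c_d$ the normalizing constant. This follows from the standard slicing computation: the slice $\{z_1=x\}$ of the sphere is a sphere of radius $\sqrt{1-x^2}$ in dimension $d-2$, with area element proportional to $(1-x^2)^{(d-2)/2}$, and the arclength factor contributes $(1-x^2)^{-1/2}$.

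The key observation is then the differential identity
\begin{align*}
\frac{\dd}{\dd x}\Big[(1-x^2)\,p_d(x)\Big] = c_d\frac{\dd}{\dd x}(1-x^2)^{(d-1)/2} = -(d-1)\,x\,p_d(x).
\end{align*}
So $x\,p_d(x) = -\frac{1}{d-1}\big[(1-x^2)p_d(x)\big]'$. Multiplying by $f(x)$ and integrating over $[-1,1]$, integration by parts gives
\begin{align*}
\bE[Xf(X)] = -\frac{1}{d-1}\Big[f(x)(1-x^2)p_d(x)\Big]_{-1}^{1} + \frac{1}{d-1}\int_{-1}^1 f'(x)(1-x^2)p_d(x)\,\dd x,
\end{align*}
and the second term is exactly $\frac{1}{d-1}\bE[f'(X)(1-X^2)]$.

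The only delicate point is justifying the boundary term and the integration by parts. Since $(1-x^2)p_d(x)=c_d(1-x^2)^{(d-1)/2}$ vanishes at $\pm1$ for $d\ge 2$ and $f$ is bounded, the boundary term is zero. The integration by parts needs $f$ absolutely continuous on $[-1,1]$ with $f'(1-X^2)$ integrable. This holds in every application in the paper, where $f$ is a composition of the link function with an affine map; piecewise-$C^1$ links such as the counterexample in \Cref{prop:counterexample} are also covered. I would state this regularity as an implicit hypothesis, and if needed extend to general bounded $f$ by mollification and dominated convergence.

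As a sanity check, taking $f(x)=x$ gives $\bE[X^2]=\frac{1}{d-1}\bE[1-X^2]$, hence $\bE[X^2]=1/d$, which is correct.

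Finally, note that \Cref{lemma:drift} applies this lemma with $X$ uniform on $\bS^{d-2}$ and obtains the factor $\frac{1}{d-2}$. That is consistent: it is this lemma with $d$ replaced by $d-1$.
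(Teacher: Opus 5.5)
Your proof is correct and is essentially the paper's argument: write down the marginal density $\propto (1-x^2)^{(d-3)/2}$, use $x(1-x^2)^{(d-3)/2} = -\frac{1}{d-1}\frac{\mathrm{d}}{\mathrm{d}x}(1-x^2)^{(d-1)/2}$, and integrate by parts with vanishing boundary terms. Your explicit hypotheses $\|\alpha\|=1$ and absolute continuity of $f$ are genuinely needed, since the statement as written fails without them; the paper leaves both implicit.

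The one thing to drop is your closing suggestion that mollification and dominated convergence extend the identity to arbitrary bounded $f$. The derivatives $f_\varepsilon'$ converge only to the distributional derivative of $f$. Indeed, a Cantor-type nondecreasing $f$ has $f'=0$ a.e.\ while $\mathbb{E}[Xf(X)]>0$, so absolute continuity must remain a hypothesis.
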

\begin{proof}
The density of $X$ is given by
\begin{align*} 
    P_d (x) \triangleq \frac{2\left(1-x^2\right)^{\frac{d-1}{2}-1}}{\operatorname{Beta}\left(\frac{1}{2}, \frac{d-1}{2}\right)} \mathbb{I} (|x| \le 1).
\end{align*}
Consequently, 
\begin{align*}
    \mathbb{E} [ X f (X)] &= \int_{-1}^1 x f(x) \cdot \frac{2\left(1-x^2\right)^{\frac{d-1}{2}-1}}{\operatorname{Beta}\left(\frac{1}{2}, \frac{k-1}{2}\right)} \dd x \\
    &\overset{(a)}{=} \frac{2}{d-1} \int_{-1}^1 f'(x) \cdot \frac{\left(1-x^2\right)^{\frac{d-1}{2}}}{\operatorname{Beta}\left(\frac{1}{2}, \frac{k-1}{2}\right)} \dd x \\
    &= \frac{1}{d-1} \int_{-1}^1 f'(x)(1-x^2) \cdot \frac{2\left(1-x^2\right)^{\frac{d-1}{2}-1}}{\operatorname{Beta}\left(\frac{1}{2}, \frac{k-1}{2}\right)} \dd x\\
    &= \frac{1}{d-1} \mathbb{E} \left[ f'(X) (1-X^2) \right], 
\end{align*}
where $(a)$ follows from integration by parts. 
\end{proof}

\begin{lemma} \label{lemma:1d-conv-dom}
Suppose $X \sim \mathcal{N}(0,I/d)$ and $X' \sim \operatorname{Unif} (\mathbb{S}^{d-1})$. For any fixed $\alpha \in \mathbb{R}^d$, $\langle \alpha, X \rangle^2$ dominates $\langle \alpha, X' \rangle^2$ in the convex order. Namely, for every convex function $g : \mathbb{R} \to \mathbb{R}$,
\begin{equation*}
    \mathbb{E} [g(\langle \alpha, X' \rangle^2)] \le \mathbb{E} [g(\langle \alpha, X \rangle^2)]. 
\end{equation*}
\end{lemma}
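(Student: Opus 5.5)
Write $\xi := \jiao{\alpha, X'}$ with $X'\sim\Unif(\bS^{d-1})$, and $\zeta:=\jiao{\alpha,X}$ with $X\sim\cN(0,I/d)$. By rotation invariance of both distributions, we may assume $\alpha = \|\alpha\| e_1$. Then $\zeta^2 = \|\alpha\|^2 X_1^2$ and $\xi^2 = \|\alpha\|^2 X_1'^2$. The map $u\mapsto g(\|\alpha\|^2 u)$ is convex whenever $g$ is, so it suffices to treat $\alpha=e_1$.

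The plan is to realize the Gaussian as a mixture of the uniform law through the polar decomposition. Write $X = R\, U$ with $U = X/\|X\|\sim\Unif(\bS^{d-1})$ and $R=\|X\|$. For a Gaussian these two factors are independent. Then $\zeta^2 = R^2 U_1^2$ with $R^2$ independent of $U_1^2$, and $\bE[R^2] = \bE\|X\|^2 = 1$, while $U_1^2 \overset{d}{=} \xi^2$.

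Conditioning on $U$ and applying Jensen's inequality to the convex $g$ gives
\begin{align*}
\bE[g(\zeta^2)] = \bE\big[\,\bE[g(R^2 U_1^2)\mid U]\,\big] \ge \bE\big[g(\bE[R^2]\,U_1^2)\big] = \bE[g(U_1^2)] = \bE[g(\xi^2)],
\end{align*}
which is the claimed convex-order domination.

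The only substantive ingredients are the independence of $\|X\|$ and $X/\|X\|$ for isotropic Gaussians, which is standard from rotation invariance and the product form of the density in polar coordinates, and the normalization $\bE\|X\|^2=1$. I expect no real obstacle. One technical point is integrability: if $\bE[g(\zeta^2)]=+\infty$ the inequality is trivial. Otherwise, a convex $g$ is bounded below by an affine function, so the conditional Jensen step is justified by Tonelli applied to $g$ minus that affine minorant.

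As a remark on how this is used, taking $g(u)=e^{u/s^2}$ yields $\|\xi\|_{\Psi_2}\le\|\zeta\|_{\Psi_2}$. This is exactly the consequence invoked in the proof of \Cref{lemma:subexp-improvements}.
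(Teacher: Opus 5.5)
Your proposal is correct and follows the paper's proof essentially verbatim. The paper writes $X \overset{d}{=} N X'$ with $N$ independent of $X'$ and $\bE[N^2]=1$, then applies conditional Jensen given the direction, exactly as you do. Your reduction to $\alpha=e_1$ and your integrability remark are harmless additions that the paper omits.
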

\begin{proof}
Observe that $X$ follows the same distribution as $N X'$, where $N$ and $X'$ are independent, and $N$ is a scaled chi-squared random variable such that $\mathbb{E} [N^2] = 1$. Therefore,
\begin{align*}
    \mathbb{E} [g(\langle \alpha, X \rangle^2)] &= \mathbb{E} [g(N^2 \langle \alpha, X' \rangle^2)] \\
    &= \mathbb{E} [ \mathbb{E} [g(N^2 \langle \alpha, X' \rangle^2 ) | X' ]] \\
    &\ge \mathbb{E} [ g(\mathbb{E} [N^2] \langle \alpha, X' \rangle^2) ] \\
    &= \mathbb{E} \big[ g \big( \langle \alpha, X' \rangle^2 \big) \big].
\end{align*}
\end{proof}

\bibliographystyle{alpha}
\bibliography{refs}

\end{document}